\documentclass{article}
\usepackage[OT1]{fontenc}
\usepackage{iclr2027_conference,times}

\usepackage{amsmath,amsfonts,bm}

\def\eqref#1{equation~\ref{#1}}

\def\1{\bm{1}}

\DeclareMathAlphabet{\mathsfit}{\encodingdefault}{\sfdefault}{m}{sl}
\SetMathAlphabet{\mathsfit}{bold}{\encodingdefault}{\sfdefault}{bx}{n}

\newcommand{\Bridges}{\mathcal{B}}
\newcommand{\Components}{\mathcal{C}}

\usepackage{amsmath,amssymb,amsthm}
\usepackage[ruled,vlined,linesnumbered]{algorithm2e}
\usepackage{booktabs}
\usepackage{colortbl}
\usepackage{tabularx}
\usepackage{graphicx}
\usepackage{wrapfig}
\usepackage{float}
\usepackage{placeins}
\usepackage{microtype}
\usepackage{xcolor}
\usepackage{hyperref}
\usepackage{url}
\definecolor{ringtreelink}{HTML}{1F5A7A}
\definecolor{tablegray}{HTML}{ECECEC}
\definecolor{autoregressive}{HTML}{A3123F}
\definecolor{oneshot}{HTML}{1C88A8}
\newcolumntype{Y}{>{\raggedleft\arraybackslash}X}
\hypersetup{
  colorlinks=true,
  linkcolor=ringtreelink,
  citecolor=ringtreelink,
  urlcolor=ringtreelink
}

\graphicspath{{figures/}}

\newtheorem{proposition}{Proposition}
\newtheorem{lemma}{Lemma}
\newtheorem{theorem}{Theorem}

\title{TopoCompile: Molecular Generation\\with Exact Global Count Constraints}

\author{Xiang Wu \quad Rong-Hua Li \quad Quanqing Xu\\\textbf{Kangfei Zhao \quad Guoren Wang}}

\begin{document}

\maketitle

\begin{abstract}
Molecular generation designs novel molecules with desired structures and
properties. Global count constraints specify molecular size,
elemental composition, and ring content without prescribing detailed structure.
Property-conditioned generators steer generation toward such targets but do
not guarantee exact
satisfaction. We introduce \textsc{TopoCompile}, which combines an
autoregressive component-tree generator with a dynamic-programming compiler
for supported component-additive integer counts. \textsc{TopoCompile}
represents molecules as trees of intact ring systems and individual atoms,
making each supported count additive over components. At each step, an
autoregressive Transformer scores component and attachment actions, while the
compiler masks actions with no target-satisfying completion. Within the fixed
component vocabulary and typed attachment rules, constrained decoding
guarantees exact equality for supported non-negative, component-additive
integer counts. Applied at inference time, these checks let the same trained
generator support different targets and count combinations without retraining.
Across four benchmarks, \textsc{TopoCompile} achieves 100\% Exact-valid yield
for Common and Rare four-count target groups; the strongest baseline within
each dataset reaches 44.9\% or less on Rare targets. In unconditional
generation, it achieves the highest validity or valid-unique-novel (V.U.N.)
among the compared methods across all four benchmark datasets.

\end{abstract}

\section{Introduction}
\label{sec:introduction}

Designing molecules with specified structural and physicochemical properties is a
central problem in drug discovery and materials design. Neural molecular
generators address this problem by learning distributions over molecular
structures and using them to produce chemically plausible candidates beyond the
training set
\citep{gomez2018automatic,jin2018junction,zang2020moflow,shi2020graphaf,luo2021graphdf}.
Recent advances in diffusion, flow, and autoregressive models have further
improved molecular generation across graph and sequence representations
\citep{vignac2023digress,jolicoeurmartineau2025anyproperty,wang2025learningorder}.

In practical molecular design, chemically plausible candidates often also need to
meet specific structural requirements. Global count constraints provide a direct
way to express these requirements: users can specify molecular size, elemental
composition, and ring content without prescribing detailed connectivity. Such constraints arise in molecular structure elucidation, where the molecular formula is known but the atom connectivity remains unresolved
\citep{stravs2022msnovelist,bohde2025diffms}. They also support controlled comparison
libraries for virtual screening and molecular-model evaluation
\citep{imrie2021deepcoy}, while exact ring counts enable targeted exploration of
atypical topologies such as high-cycle-rank molecules. The generator chooses
detailed structures from the learned distribution while the user controls
global molecular characteristics.

Existing approaches differ in how precisely they control generation and what
users must specify. Property-conditioned generators steer generation toward
multiple requested attributes but generally do not guarantee exact satisfaction
\citep{liu2024graphdit,jolicoeurmartineau2025anyproperty,lee2026fragfm}.
Structure-conditioned methods can provide stronger structural control, but require a user-specified scaffold or partial structure, fixing part of the molecule in advance
\citep{xu2023treeinvent,lee2025genmol}. Spectrum-conditioned methods instead require
a mass spectrum and molecular formula \citep{bohde2025diffms}. Hard-constraint
methods enforce selected rules through projection, search, or symbolic reasoning
\citep{sharma2024prodigy,madeira2024construct,saikali2026constrained,geng2026neural}.
Existing approaches therefore do not directly address our setting: satisfying multiple count constraints exactly while leaving the scaffold and partial structure unspecified.

To achieve this, we build molecules from scratch and check before each addition
whether the requested counts remain achievable. A locally valid addition can
still make these targets unreachable, yet enumerating every possible completion
would require a combinatorial search over future molecular structures. We address
both difficulties with a lossless component-tree representation. We cut each
bridge bond, whose removal disconnects the molecular graph, and treat every
intact ring system and every atom outside rings as a component. The resulting
component-level graph is a tree. Each supported count is the sum of fixed,
non-negative integer contributions from these components. Each component also has
attachment ports that identify the local atom and bond type for a connection.
Under a fixed component vocabulary and attachment rules, whether the generator
can complete a partial molecule depends only on the number of components remaining,
unmatched ports by bond type, and remaining target counts. A dynamic program can
therefore test for an exact completion using this compact state, without
enumerating complete molecular graphs.

Building on this representation, we introduce \textbf{TopoCompile}, which combines
an autoregressive Transformer for component-tree generation with a
dynamic-programming compiler.
At each step, the Transformer scores component and attachment choices, while the
compiler removes choices that leave no target-satisfying completion. The generator
then samples from the renormalized probabilities, preserving its relative
preferences among the remaining choices. Thus, the generator learns to construct
plausible molecules, while the compiler ensures that the requested counts remain
achievable. Because we apply constraints only during decoding, the same
trained generator supports different target values and count combinations
without retraining. Within the fixed component vocabulary and attachment rules,
 constrained decoding guarantees target counts for every molecule.

Our contributions are:
\begin{itemize}
  \item \textbf{Exact-control interface.}
  TopoCompile provides exact control over supported global counts while leaving
  the scaffold, component identities, and component-tree layout unspecified. The
  same trained generator supports different target values and combinations of
  supported counts without retraining.

  \item \textbf{Completion-checking compiler.}
  We derive a dynamic program over the remaining component count, unmatched ports
  by bond type, and remaining target counts. We prove that it determines exactly
  whether a target-satisfying completion exists under the fixed vocabulary and
  attachment rules, enabling exact decoding without rejection or repair.

  \item \textbf{Evaluation.}
  Across four benchmarks, TopoCompile achieves 100\% exact-valid yield for both
  frequently and infrequently observed four-count targets, whereas the strongest
  baseline in each dataset reaches at most 44.9\% on infrequent targets. In
  unconditional generation, it achieves the highest validity or yield of valid,
  unique, novel molecules among the compared methods on all four benchmarks.
\end{itemize}

\section{Related Work}
\label{sec:related}

\paragraph{Molecular generation.}
Neural molecular generators fall into two categories:
iterative global-update methods and sequential autoregressive methods.
Diffusion models
\citep{jo2022gdss,vignac2023digress,liu2024graphdit,lee2025genmol}
and discrete flow-matching models
\citep{eijkelboom2024variational,qin2025defog,lee2026fragfm}
repeatedly refine a global state represented as a molecular graph,
sequence, or fragment graph.
Autoregressive models instead construct molecules through local
decisions, adding atoms and bonds sequentially
\citep{shi2020graphaf,luo2021graphdf} or optimizing the generation
order \citep{wang2025learningorder}.
To reduce sequential complexity, several frameworks use trees to
organize motifs or guide generation
\citep{jin2018junction,jin2020hierarchical,ahn2022stgg}.
Grammar-based models constrain generation through syntactic rules
\citep{kusner2017grammar}, semantic attributes \citep{dai2018syntax},
or molecular graph grammars \citep{kajino2019mhg}.
These methods target valid construction; TopoCompile checks whether each
partial molecule can meet the requested global counts.

\paragraph{Constrained molecular generation.}
Existing methods provide different levels of control over generated molecules.
Reward-based and property-conditioned approaches bias generation toward desired
molecular properties, but generally do not guarantee exact satisfaction of
requested values \citep{you2018gcpn,liu2024graphdit,
jolicoeurmartineau2025anyproperty}.
Structure-guided methods provide stronger control by conditioning on
user-specified scaffolds or topological templates; for example, Tree-Invent
enforces user templates through masked generation \citep{xu2023treeinvent}.
However, such methods fix part of the molecular structure in advance, reducing
structural freedom.
Hard-constraint methods go further by enforcing selected constraints through
projection, constraint programming, or symbolic reasoning
\citep{sharma2024prodigy,madeira2024construct,
saikali2026constrained,geng2026neural,omrani2020constraints}.
Related mechanisms such as valence masks \citep{liu2018constrained} and
valid-by-construction representations \citep{krenn2020selfies} guarantee
aspects of chemical validity, but do not by themselves guarantee exact
satisfaction of a user-specified global count vector.
TopoCompile targets a different setting: exact control of multiple
global counts while leaving the molecular components and their arrangement
unspecified.

\section{The TopoCompile Method}
\label{sec:representation}

\noindent \textbf{Problem formulation.} We consider exact constrained molecular generation with a trained generator
$p_\theta$. Let $R:\mathcal{G}\rightarrow\mathbb{N}_0^d$ map a supported
molecular graph $G$ to a vector of discrete global counts, and let
$\mathbf{b}^\star\in\mathbb{N}_0^d$ be the user-specified target. The task is
to generate $G\in\mathcal{G}$ such that $R(G)=\mathbf{b}^\star$; if the user
additionally specifies a total component count $M$, the generated graph must
contain exactly $M$ components. The scaffold, component identities, and
component-tree layout remain unspecified, and different target values and
combinations of supported counts must be handled by the same trained generator
without retraining or post-hoc rejection or repair of completed samples.

\noindent \textbf{Overview.} Section~\ref{sec:offline-learning} introduces the
component-tree representation and autoregressive generator.
Section~\ref{sec:constraint-compilation} derives a dynamic program for exact
completion checking, and Section~\ref{sec:prefix-safe-sampling} uses it to mask
infeasible actions during decoding.
\textbf{Figure~\ref{fig:ringtree-method}} provides an overview of the complete
training and constraint-guided sampling pipeline.

\subsection{Offline learning of the component-tree generator}
\label{sec:offline-learning}
\noindent\textbf{Tree representation of molecular graphs.}
\label{sec:ringtree-representation}
We represent each molecule as a tree of molecular components. This
representation reduces autoregressive generation to adding one component to
one parent at each step while retaining enough information to reconstruct the
original molecular graph. For a connected molecular graph
$G=(V,E,\mathbf x,\mathbf y)$, an edge is a bridge if removing it disconnects
$G$. In a molecular graph, bridge edges are exactly the edges that do not
belong to any ring. Removing these non-ring edges partitions $G$ into
atom-disjoint components
$\Components(G)=\{C_1,\ldots,C_M\}$. Replacing each component with one node and
retaining one edge for each bridge produces a component-level graph $\bar T(G)$
that records the connections among components. As shown in
Figure~\ref{fig:ringtree-method}(A2), its nodes fall into two types: single
atoms and ring systems; ring-system nodes include single rings and fused,
spiro, or bridged multi-ring structures. By bridge decomposition,
$\bar T(G)$ is a tree \citep{tarjan1974bridges}.

\paragraph{Component vocabulary and encoding.}
For each component $C_i$, $\mathcal P_i$ lists the ports exposed by removed
bridges. Each port records its local atom, bridge-bond type, and symmetry class
for equivalent ports. We treat $\widehat C_i=(C_i,\mathcal P_i)$ as a
port-annotated template and write
$\widehat{\Components}(G)=\{\widehat C_i:C_i\in\Components(G)\}$.
Canonicalization independent of atom ordering gives
$\mathcal V=\operatorname{Unique}
(\bigcup_{G\in\mathcal D_{\mathrm{train}}}\widehat{\Components}(G))$.
Each component template $\tau\in\mathcal V$ stores a complete internal graph
and a fixed port set $\mathcal P(\tau)$, which determines
$\boldsymbol\sigma(\tau)$. Only canonically identical templates share a
vocabulary entry (Figure~\ref{fig:ringtree-method}(A2)).

To encode a molecule, we assign the corresponding template
$\tau_i\in\mathcal V$ to each component $C_i$, with
$\mathcal P_i=\mathcal P(\tau_i)$.
The component tree identifies which components connect, but each tree edge must
also identify the atoms that form the connection. For every edge $(i,j)$, we
record ports $p_{i\to j}$ and $p_{j\to i}$ on the two components and retain
the original bridge attribute $\mathbf y^{\mathrm{bridge}}_{ij}$. The arrows
distinguish the two ends of an undirected bond; they do not indicate bond
directions.

For the ring-system--Br connection in Figure~\ref{fig:ringtree-method}(A2),
the endpoint ports locate the attachment atoms, and
$\mathbf y^{\mathrm{bridge}}_{ij}$ records the bond type. The complete
encoding is
\begin{equation}
  \operatorname{Enc}(G)=\left(
  \{(\tau_i,\mathcal P_i)\}_{i=1}^{M},
  \{(i,j,p_{i\to j},p_{j\to i},\mathbf y^{\mathrm{bridge}}_{ij})\}_{(i,j)\in E_{\bar T}}
  \right).
  \label{eq:ringtree-encoding}
\end{equation}

\begin{figure*}[t]
  \centering
  \includegraphics[width=\textwidth]{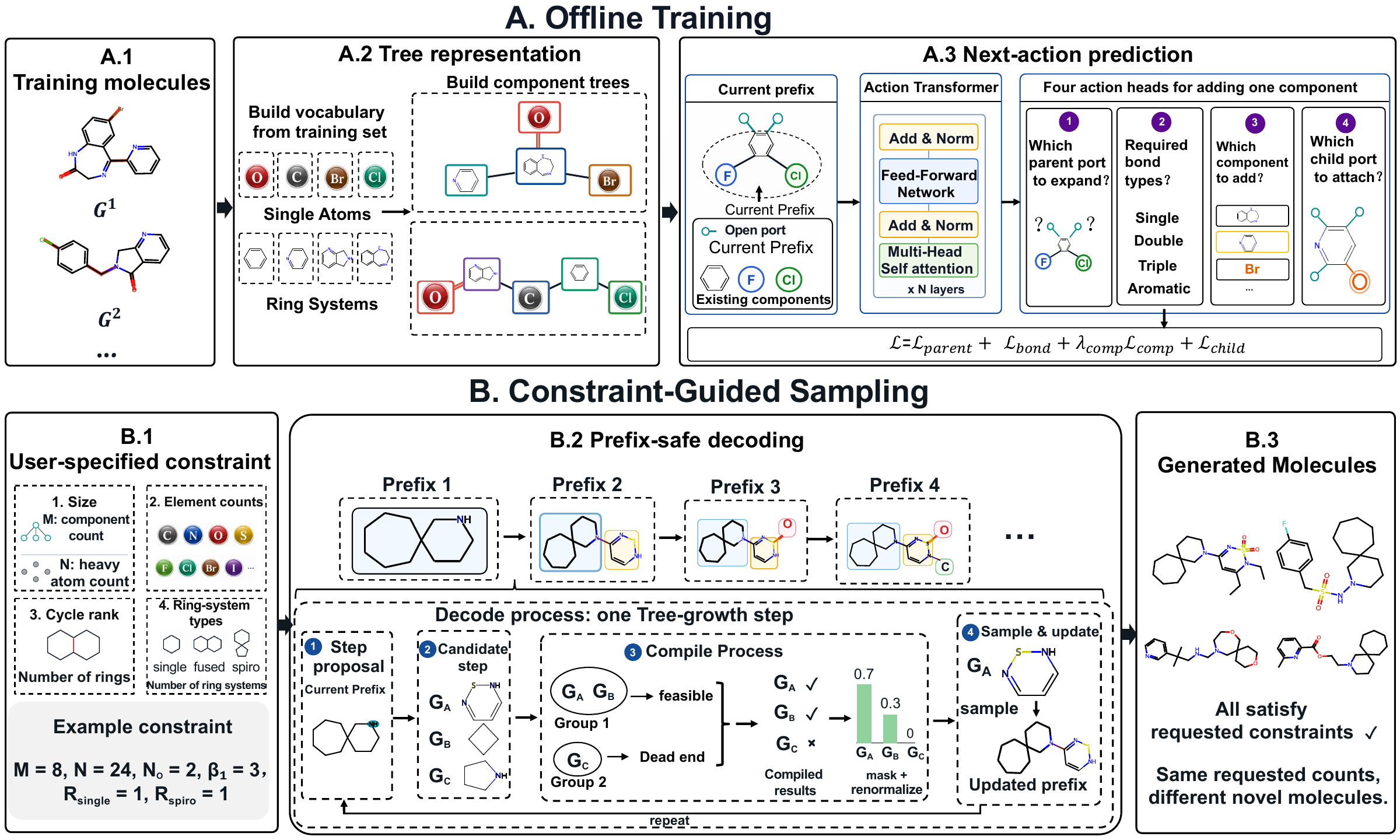}
  \caption{
  Overview of \textsc{TopoCompile}.
  (A) Molecules are encoded as trees of atom and ring-system components
  to train an autoregressive generator.
  (B) During sampling, the dynamic program masks additions with no
  target-satisfying completion; the completed tree is decoded into a
  molecule.
  }
  \label{fig:ringtree-method}
\end{figure*}

The decoder retrieves component graphs and restores bridges from their ports
and bond attributes.

Let $\mathcal G_{\mathrm{rep}}$ contain connected graphs after canonicalization
that omits stereochemical and isotopic annotations. Their atom states, bond
states, and port-local indices must lie in the declared finite encoding sets,
and their port-annotated templates must belong to $\mathcal V$.

\begin{proposition}[Lossless component-tree encoding]
For every $G\in\mathcal G_{\mathrm{rep}}$,
$\operatorname{Dec}(
\operatorname{Enc}(G))\simeq G$,
where $\simeq$ preserves the stored atom and bond attributes.
\end{proposition}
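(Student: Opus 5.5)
We prove the statement by exhibiting an explicit isomorphism $\phi:V(\operatorname{Dec}(\operatorname{Enc}(G)))\to V(G)$. We build it component by component from the canonicalization maps, then check that it is a bijection on vertices and that it preserves edges together with their stored attributes.

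\textbf{Step 1: fix the components and their embeddings.} Let $\Components(G)=\{C_1,\ldots,C_M\}$ be the bridge decomposition, and let $B$ be the set of bridges. Since $G\in\mathcal G_{\mathrm{rep}}$, each $\widehat C_i$ is canonically identical to a template $\tau_i\in\mathcal V$. Canonicalization is independent of atom ordering, so there is an attribute-preserving isomorphism $\psi_i:\tau_i\to C_i$ of internal graphs. We choose $\psi_i$ to map the port set $\mathcal P(\tau_i)$ onto $\mathcal P_i$ and to respect local atoms. When a port belongs to a nontrivial symmetry class, the stored port $p_{i\to j}$ is taken relative to this same chosen $\psi_i$. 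This convention is what makes the encoding well defined. The decoder instantiates a fresh copy of each $\tau_i$, and we set $\phi=\bigsqcup_i\psi_i$. Because the $C_i$ partition $V$ (they are atom-disjoint and cover every atom), $\phi$ is a bijection on vertices. Each $\psi_i$ is an isomorphism, so $\phi$ preserves atom attributes and all intra-component edges with their bond attributes.

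\textbf{Step 2: the bridges match exactly.} The decoder adds one bond for each tree edge $(i,j)\in E_{\bar T}$. The bond joins the local atoms of $p_{i\to j}$ and $p_{j\to i}$ and carries attribute $\mathbf y^{\mathrm{bridge}}_{ij}$. By construction of $\bar T(G)$ there is one tree edge per bridge, so this correspondence is a bijection between $E_{\bar T}$ and $B$. The ports record the original endpoint atoms, so under $\phi$ each decoded inter-component bond maps onto exactly its original bridge, with the same attribute. Two further facts complete the edge bijection. First, no non-bridge edge joins two different components, since non-bridge edges lie in rings and ring systems are kept intact. Second, bridges never lie inside a component. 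Together these give $E(G)=\bigsqcup_i E(C_i)\sqcup B$, and every part of this decomposition is reproduced. Hence $\phi$ is an isomorphism that preserves attributes.

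\textbf{Main obstacle.} The delicate step is the interaction between port symmetry classes and the choice of $\psi_i$. Two ports in the same symmetry class are interchangeable only under an automorphism of the template. If the encoder and decoder resolved the class using different automorphisms of different components, the composite mapping could appear inconsistent. We plan to resolve this in one of two ways. The first is to insist that port indices are port-local indices relative to the canonical template, as the definition of $\mathcal G_{\mathrm{rep}}$ requires, so that the encoder fixes a single $\psi_i$ per component. The alternative is to note that any automorphism of $\tau_i$ that permutes equivalent ports can be absorbed into $\psi_i$ independently for each $i$. This works because distinct components share no atoms and each $\psi_i$ is chosen separately, so adjusting one component's automorphism never affects another. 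Stereochemical and isotopic data are excluded by the definition of $\mathcal G_{\mathrm{rep}}$, so the relation $\simeq$ only needs to preserve the stored attributes, and the argument above delivers exactly that.
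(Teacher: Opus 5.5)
Your proposal is correct and follows essentially the same route as the paper's proof. Both argue that the components partition the atoms and are recreated from their templates with all internal bonds, and that tree-edge records correspond one-to-one with bridges whose endpoints are named by component-local port indices. Both conclude that the local index maps form an attribute-preserving isomorphism. Your explicit handling of port symmetry classes, by fixing a single embedding $\psi_i$ per component, is a more careful spelling-out of the paper's assumption that each port slot stores the component-local endpoint atom.
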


Appendix~\ref{app:inverse-proof} proves the proposition. Generation recombines
fixed component templates rather than creating new ring interiors or port
configurations; whole-molecule test coverage ranges from 93.3\% to 99.3\%
(Appendix~\ref{app:boundaries}).


\paragraph{Construction sequences.}
We randomize the root and use parent-before-child traversals to expose the
generator to multiple valid construction sequences for each molecule. The
traversal also ensures that every non-root component has an existing parent
when the model adds it. Each template $\tau$ has a bond-type vector
$\boldsymbol{\sigma}(\tau)
=(n_{\mathrm{single}},n_{\mathrm{double}},
n_{\mathrm{triple}},n_{\mathrm{aromatic}})$ that counts its attachment ports
by bridge-bond type. Let $h_t$ be the partial tree before step $t$.
A non-root action $a_t=(p_t,\boldsymbol{\sigma}_t,\tau_t,q_t)$ selects an open
parent port $p_t$, a component $\tau_t$ with
$\boldsymbol{\sigma}_t=\boldsymbol{\sigma}(\tau_t)$,
and a child port $q_t$ of matching bond type.
Adding the component joins the selected ports with a bridge, consumes
both ports, and leaves its other ports open.
The root action selects only a bond-type vector and component, with
$p_1=q_1=\mathrm{null}$.

\paragraph{Next-action prediction and training.}
We train an autoregressive Transformer for
$p_\theta(a_t\mid h_t,M)$, where $M$ is the total component count.
Four prediction heads factor each addition, allowing compatibility to narrow
successive choices without a separate class for every parent-port, bond-type vector,
component, and child-port tuple.
The parent-port and bond-vector heads predict where to extend the tree
and the new component's connection requirements.
The component head selects from
$\mathcal V(\boldsymbol{\sigma}_t)
=\{\tau\in\mathcal V:
\boldsymbol{\sigma}(\tau)=\boldsymbol{\sigma}_t\}$,
and the child-port head selects a port matching the parent bond type.
Under teacher forcing, each head predicts its corresponding field of
$a_t^\star$ from the ground-truth partial tree $h_t^\star$, using the
weighted cross-entropy loss
$\mathcal L=\mathcal L_{\mathrm{bond}}
+\lambda_{\mathrm{comp}}\mathcal L_{\mathrm{comp}}
+\mathcal L_{\mathrm{parent}}+\mathcal L_{\mathrm{child}}$,
where $\lambda_{\mathrm{comp}}>0$ weights component prediction
(Appendix~\ref{app:model-training}).
At the root, only bond-vector and component losses apply because no parent
attachment exists.
Unconditional generation draws $M$ from its empirical training distribution
and samples actions autoregressively from $p_\theta$;
constrained generation reuses this model with the completion checks.

\subsection{Compiling exact global constraints}
\label{sec:constraint-compilation}
\label{sec:additive-resources}
\label{sec:structural-reachability}
\label{sec:resource-compiler}

The learned generator scores plausible component additions, whereas exact
control requires knowing whether each addition leaves a target-satisfying
completion. Model probabilities alone do not provide this feasibility test.
A dynamic program performs the test from a compact summary of the remaining
requirements without enumerating future molecular structures.

\noindent\textbf{Targets and component contributions.}
The user specifies a vector of target counts $\mathbf b^\star$, without
specifying components or their tree layout. For a fixed total component count,
we write
$\mathbf c^\star=(M,\mathbf b^\star)$.
When the user omits $M$, we check
$M=1,\ldots,M_{\max}$ and sample among the feasible values using the
restricted empirical prior in Appendix~\ref{app:evaluation-protocols}.
Here, $M_{\max}$ is the generator's component-count limit.

We call $r:\mathcal V\to\mathbb N_0^d$ a supported count map for $R$ if it
assigns each template a non-negative integer contribution, attachment leaves these
contributions unchanged, and every constructible molecule satisfies
$R(G)=\sum_{i=1}^{M}r(\tau_i)$.
These conditions allow us to update the remaining targets by subtracting
$r(\tau)$ after adding a component.
Heavy-atom and element counts satisfy them directly. Ring-system count
and cycle rank do as well because bridge attachments do not create
cycles between components. Appendix~\ref{app:cycle-additivity} derives this
additivity.
The current compiler supports exact equalities over these non-negative,
component-additive integer counts. Quantities that depend on component
connections, such as cross-component motifs, require a different state or
acceptance rule. The formulation excludes continuous descriptors, inequalities,
and optimization objectives.

\noindent\textbf{A compact state for completion checking.}
A partial tree's completion feasibility depends on its remaining component
budget, open attachment ports, and residual target counts. After root
selection, we collect them in $z=(k,\mathbf f,\mathbf b)$: $k$ counts the
components still to add, $\mathbf f\in\mathbb N_0^4$ counts open ports by bond
type, and $\mathbf b$ contains the counts that future components must supply.
The coordinates of $\mathbf f$ match $\boldsymbol\sigma(\tau)$ but summarize
the entire partial tree.

The compiler treats two ports as compatible exactly when their bridge-bond types match.
Atom locations change the molecule but not compatibility, and each template fixes
its count contribution. Partial trees with the same
$(k,\mathbf f,\mathbf b)$ share the same completion-feasibility result.

\paragraph{Grouping additions with the same effect.}
For an action $a=(p,\boldsymbol{\sigma},\tau,q)$, the state update depends only on
$\gamma(a)=(\boldsymbol{\sigma}(\tau),r(\tau),\ell(q))$, where $\ell(q)$ is the
child port's bond type.
We group actions by this tuple and denote the finite group set by
$\Gamma$.
For $\gamma=(\boldsymbol{\sigma}_\gamma,r_\gamma,\ell_\gamma)$,
let $\mathbf e_{\ell_\gamma}$ denote the unit vector for bond type
$\ell_\gamma$. The transition is
\begin{equation}
\operatorname{Next}(z,\gamma)
=\bigl(k-1,\ \mathbf f+\boldsymbol{\sigma}_\gamma-2\mathbf e_{\ell_\gamma},\
\mathbf b-r_\gamma\bigr).
\label{eq:tc-compact-next}
\end{equation}
The new component supplies $r_\gamma$ and its ports; the bridge
consumes one existing and one new open port of type $\ell_\gamma$.
Grouping shares only feasibility computation: the generator still
scores concrete components and attachment sites.

\paragraph{Joint feasibility DP.}
A feasible continuation must use one sequence of additions to close every
open port and reduce every residual count to zero.
Let $\Gamma(z)$ contain groups satisfying
$\mathbf f_{\ell_\gamma}\geq1$,
$(\boldsymbol{\sigma}_\gamma)_{\ell_\gamma}\geq1$,
and $r_\gamma\leq\mathbf b$ coordinatewise.
Define $F_{\mathbf c^\star}(z)$ to indicate whether exactly $k$ further
additions can close all ports and supply the remaining counts:
\begin{equation}
F_{\mathbf c^\star}(0,\mathbf f,\mathbf b)
=\mathbf 1[\mathbf f=\mathbf 0\land\mathbf b=\mathbf 0],\qquad
F_{\mathbf c^\star}(z)=\bigvee_{\gamma\in\Gamma(z)}
F_{\mathbf c^\star}\!\left(\operatorname{Next}(z,\gamma)\right),\quad k>0.
\label{eq:tc-compact-feasibility}
\end{equation}
For example, with one component left, the same template must close all open
ports and supply the residual counts; different templates satisfying these
requirements separately do not suffice.
Separate structural and count reachability checks may succeed through
different continuations, so the compiler uses them only to prune states.
It evaluates the joint recurrence on demand and memoizes results across
construction histories. Appendix~\ref{app:full-compiler} gives the algorithm
and pruning rules.

\subsection{Constrained sampling}
\label{sec:prefix-safe-sampling}
\label{sec:compiler-guided-sampling}

\paragraph{Constrained decoding.}
For fixed $\mathbf c^\star=(M,\mathbf b^\star)$, a root component $\tau$ with
$r(\tau)\leq\mathbf b^\star$ leaves state
$z_{\mathrm{root}}(\tau)
=(M-1,\boldsymbol{\sigma}(\tau),\mathbf b^\star-r(\tau))$.
We retain roots satisfying
$F_{\mathbf c^\star}(z_{\mathrm{root}}(\tau))=1$
and sample from their renormalized model probabilities.
When the user does not specify $M$, these checks test all allowed values before
selecting among feasible roots. At each non-root step,
the partial component tree $h_t$ provides the
generator's molecular context, while $z_t$ specifies the completion
requirements.
For each queried group, we cache
$\Lambda_{\mathbf c^\star}(z,\gamma)
=F_{\mathbf c^\star}(\operatorname{Next}(z,\gamma))$.
Provided the retained choices have positive total model probability,
we sample from
\begin{equation}
  \widetilde p_\theta(a\mid h_t,z_t,\mathbf c^\star)
  =
  \frac{
    p_\theta(a\mid h_t,M)\Lambda_{\mathbf c^\star}(z_t,\gamma(a))
  }{
    \sum_{a'}
    p_\theta(a'\mid h_t,M)\Lambda_{\mathbf c^\star}(z_t,\gamma(a'))
  }.
  \label{eq:tc-compact-sampling}
\end{equation}
This preserves probability ratios among retained actions.
It does not in general equal the original model distribution conditioned on
the targets because Boolean feasibility records the existence of a matching
continuation, not its model probability.
Thus, exactness here refers to satisfaction of the requested counts, not exact
sampling from $p_\theta(\cdot\mid R=\mathbf b^\star)$.
After sampling, we attach the component, update $z_t$ using
Equation~\eqref{eq:tc-compact-next}, and continue until $k=0$.
We then decode the completed component tree into a molecular graph.

\begin{theorem}[Prefix safety and exact completion]
Fix a finite vocabulary $\mathcal V$, the bond-type matching attachment rules
defined above,
a supported count map $r:\mathcal V\to\mathbb N_0^d$, and a target
$\mathbf c^\star=(M,\mathbf b^\star)$.
Assume that the feasible roots have positive total model probability and that
the retained actions do so at every nonterminal state reached during sampling.
Then every sampled partial tree admits a target-satisfying continuation, and
sampling can continue until $k=0$. Every terminal sample has exactly $M$
components, all ports matched, and $R(G)=\mathbf b^\star$.
\end{theorem}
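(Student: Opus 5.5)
The plan is an induction on the number of sampling steps, with invariant: the compact state $z_t=(k_t,\mathbf f_t,\mathbf b_t)$ equals the true summary of $h_t$ (components still to add, open ports by bond type, residual counts $\mathbf b^\star-\sum_{i} r(\tau_i)$), and $F_{\mathbf c^\star}(z_t)=1$.

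The first step is a \emph{soundness lemma} for the recurrence \eqref{eq:tc-compact-feasibility}: if $F_{\mathbf c^\star}(z)=1$ and a concrete partial tree has summary $z$, then it has a concrete continuation with exactly $k$ more components that closes all ports and supplies exactly $\mathbf b$. I would prove this by induction on $k$.
\begin{itemize}
\item For $k=0$, $F=1$ forces $\mathbf f=\mathbf 0$ and $\mathbf b=\mathbf 0$, so the tree is already complete.
\item For $k>0$, some $\gamma\in\Gamma(z)$ has $F(\operatorname{Next}(z,\gamma))=1$. The conditions defining $\Gamma(z)$ give an open parent port of type $\ell_\gamma$, a template $\tau$ with $\boldsymbol\sigma(\tau)=\boldsymbol\sigma_\gamma$ and $r(\tau)=r_\gamma$, and a child port of type $\ell_\gamma$ on $\tau$. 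These realize a concrete action $a$ with $\gamma(a)=\gamma$.
\item Performing $a$ changes the concrete summary exactly as in \eqref{eq:tc-compact-next}. One port of type $\ell_\gamma$ is consumed on each side, and $\tau$'s remaining ports open. The residual drops by $r(\tau)$, because attachment does not change contributions (supported count map). The induction hypothesis then applies.
\end{itemize}

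The second step is the main induction along sampling.
\begin{itemize}
\item At the root, only $\tau$ with $F(z_{\mathrm{root}}(\tau))=1$ are retained. These have positive total probability by assumption, so a root is sampled and the invariant holds.
\item At a nonterminal state with $k_t>0$, the assumption makes the normalizer in \eqref{eq:tc-compact-sampling} positive, so the distribution is well defined. Any sampled $a$ has $\Lambda_{\mathbf c^\star}(z_t,\gamma(a))=1$, that is, $F(\operatorname{Next}(z_t,\gamma(a)))=1$.
\item By the port-summary computation above, $\operatorname{Next}(z_t,\gamma(a))$ is the true summary of $h_{t+1}$, so the invariant persists.
\item The soundness lemma then gives every sampled prefix a target-satisfying continuation.
\end{itemize}
Since $k$ decreases by one each step, the process reaches $k=0$ after exactly $M-1$ non-root steps, for $M$ components total.

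The third step handles the terminal state. At $k=0$ the invariant gives $F=1$, hence $\mathbf f=\mathbf 0$ (all ports matched) and $\mathbf b=\mathbf 0$, so $\sum_i r(\tau_i)=\mathbf b^\star$. The result is a tree of templates joined by bond-type-matched bridges, which is a constructible molecule decoded by $\operatorname{Dec}$. The defining property of a supported count map then gives $R(G)=\sum_i r(\tau_i)=\mathbf b^\star$.

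The main obstacle is the soundness direction, specifically that each abstract group in $\Gamma(z)$ is realizable by a concrete action on the actual tree. The conditions $\mathbf f_{\ell_\gamma}\ge1$ and $(\boldsymbol\sigma_\gamma)_{\ell_\gamma}\ge1$ must be shown to guarantee an actual open parent port and child port of the right type. That in turn needs the invariant that $\mathbf f$ exactly counts open concrete ports by type, which is why I carry it explicitly through the induction. A further point is that the generator's action space must expose every such realization (compatibility depends only on bond type). Beyond that, the argument is bookkeeping.
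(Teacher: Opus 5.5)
Your proposal is correct and is essentially the paper's proof: soundness of the recurrence by induction on $k$, a feasible root state, preservation of $F_{\mathbf c^\star}=1$ by every retained action, and the terminal condition $\mathbf f=\mathbf b=\mathbf 0$, which gives port closure and, through the supported count map, $R(G)=\mathbf b^\star$. Two steps the paper leaves implicit are spelled out in your version: the invariant that $z_t$ is the true open-port and residual summary of $h_t$, and the realizability of each group in $\Gamma(z)$ by a concrete action. The paper's relative-completeness half is not needed for this statement.
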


\textit{Proof sketch.}
The root check starts from a feasible state, and each retained action preserves
completion feasibility. At $k=0$, the terminal condition
$\mathbf f=\mathbf b=\mathbf 0$ gives port closure and exact target counts.
Appendix~\ref{app:resource-proof} gives the full proof.

\paragraph{Scope and failure behavior.}
The guarantee is relative to the component vocabulary and the bond-type
matching rules above. The compiler certifies infeasibility only if every
allowed $M$ compiles and none admits a feasible root. A joint-DP state-limit
error makes no infeasibility claim. Zero model probability across retained
choices leaves the sampling distribution undefined despite DP feasibility.
Exact count satisfaction and port closure do not establish chemical validity
or synthesizability; Section~\ref{sec:experiments} evaluates validity separately.

\paragraph{Time complexity.}
Let $d$ count controlled quantities, $S$ all DP states evaluated during
compilation or sampling, and $G_{\max}$ the maximum groups examined per state.
$T_{\mathrm{dec}}(M)$ covers model
prediction, candidate processing, cached checks, and reconstruction. After
preprocessing, compiling one target and generating $B$ molecules costs
$O(dSG_{\max}+B\,T_{\mathrm{dec}}(M))$. All outputs share the compiled table,
giving an amortized compilation cost of
$O(dSG_{\max}/B)$ per molecule. Its pseudo-polynomial dependence on target
values can make large joint requests expensive. Appendix~\ref{app:time-complexity}
derives the bound and covers the case where the user does not specify $M$.

\section{Experiments}
\label{sec:experiments}
\raggedbottom

We evaluate whether \textsc{TopoCompile} delivers exact counts while preserving
generation quality and practical cost. We study four questions: \textbf{RQ1:} How reliably does
\textsc{TopoCompile} generate molecules with requested counts? \textbf{RQ2:}
What do joint completion checking and learned preferences contribute?
\textbf{RQ3:} How competitive is \textsc{TopoCompile} in unconditional
generation? \textbf{RQ4:} How do methods compare in exact-output delivery
efficiency?
\subsection{Experimental Setup}

\paragraph{Datasets and tasks.}
We evaluate on MOSES \citep{polykovskiy2020moses}, ZINC250K
\citep{sterling2015zinc}, GuacaMol \citep{brown2019guacamol}, and NPGen
\citep{lee2026fragfm}. TopoCompile supports different combinations of
component-additive count constraints. We evaluate eight constraint schemas over
heavy-atom count $N$, cycle rank $\beta$, oxygen count $N_{\rm O}$, and
ring-system count $R_{\mathrm{sys}}$. A four-count request specifies target
values for all four quantities $(N,\beta,N_{\rm O},R_{\mathrm{sys}})$, whereas
the other schemas specify only one, two, or three of them. The user does not
specify the component count $M$; TopoCompile instead considers supported $M$
values and selects among those admitting a feasible completion. The main text
focuses on the four-count schema, while results for the subset schemas are
reported in Appendix~\ref{app:shared-control}. For unconditional generation,
where no count targets are given, TopoCompile samples $M$ directly from its
empirical training distribution and generates autoregressively. Detailed task
settings and evaluation protocols are provided in Appendices~\ref{app:evaluation-protocols}
and~\ref{app:shared-control}.

\paragraph{Baselines.}
For exact-control generation, we compare \textsc{TopoCompile} with Graph DiT
\citep{liu2024graphdit}, STGG+ \citep{jolicoeurmartineau2025anyproperty},
FragFM+Guidance \citep{lee2026fragfm}, GenMol+Filter \citep{lee2025genmol},
and STGG+ Count Mask under the same target requests and attempt budgets.
Appendix~\ref{app:baseline-implementations} details baseline implementations
and configurations. For unconditional generation, Table~\ref{tab:standard-compact}
lists the established molecular generators compared with \textsc{TopoCompile}.

\paragraph{Evaluation metrics.}
For exact-control generation, \textbf{Valid} measures chemical validity, while
\textbf{Exact-valid} additionally requires every requested count to be satisfied
exactly. \textbf{Novelty} and \textbf{internal diversity} assess whether exact
outputs are new and structurally diverse. In the sampling-policy ablation,
where all compared policies use the same exact completion check, conditional FCD
measures how closely their exact outputs match the target-specific reference
distribution. For unconditional generation, we follow each benchmark's
standard Validity, valid-unique-novel (V.U.N.) yield, and FCD protocol.
Appendices~\ref{app:evaluation-protocols} and~\ref{app:shared-control} detail
the settings.

\subsection{RQ1: How reliably does TopoCompile meet requested count constraints?}
\label{sec:shared-control-results}

\begin{table}[H]
\centering
\caption{Exact four-count generation
$(N,\beta,N_{\rm O},R_{\mathrm{sys}})$. \textbf{(A)} Exact-valid yield on
Common and Rare targets. \textbf{(B)} Novelty and internal diversity of matched
Common exact outputs.}
\label{tab:shared-exact-control}
\scriptsize
\setlength{\tabcolsep}{0.7pt}
\renewcommand{\arraystretch}{1.04}

\begin{tabularx}{\textwidth}{@{}l!{\vrule width 0.6pt}YY!{\vrule width 0.6pt}YY!{\vrule width 0.6pt}YY!{\vrule width 0.6pt}YY@{}}
\toprule
\multicolumn{9}{@{}l}{\textit{A. Exact-valid yield (\%)}} \\
\rowcolor{tablegray}
\textbf{Method}
& \multicolumn{2}{c!{\vrule width 0.6pt}}{\textbf{MOSES}}
& \multicolumn{2}{c!{\vrule width 0.6pt}}{\textbf{ZINC250K}}
& \multicolumn{2}{c!{\vrule width 0.6pt}}{\textbf{GuacaMol}}
& \multicolumn{2}{c}{\textbf{NPGen}} \\
\rowcolor{tablegray}
& \textit{Common} & \textit{Rare}
& \textit{Common} & \textit{Rare}
& \textit{Common} & \textit{Rare}
& \textit{Common} & \textit{Rare} \\
\midrule
\rowcolor{tablegray}
Graph DiT & 56.64 & 5.57 & 43.36 & 8.40 & 0.49 & 4.30 & \multicolumn{2}{c}{\textsc{OOM}} \\
STGG+ & 80.86 & 29.00 & 79.59 & 37.50 & 34.86 & 13.87 & 16.89 & 3.71 \\
\rowcolor{tablegray}
FragFM+Guidance & 22.07 & 0.20 & 11.04 & 0.00 & 4.98 & 0.00 & 1.56 & 0.00 \\
GenMol+Filter & 1.86 & 0.00 & 1.46 & 0.00 & 1.66 & 0.00 & 0.00 & 0.00 \\
\rowcolor{tablegray}
STGG+ Count Mask & 83.59 & 35.55 & 82.71 & 44.92 & 46.88 & 15.53 & 27.44 & 14.06 \\
\rowcolor{cyan!15}
\textbf{TopoCompile (ours)} & \textbf{100.00} & \textbf{100.00} & \textbf{100.00} & \textbf{100.00} & \textbf{100.00} & \textbf{100.00} & \textbf{100.00} & \textbf{100.00} \\
\bottomrule
\end{tabularx}

\vspace{1.5pt}

\begin{tabularx}{\textwidth}{@{}l!{\vrule width 0.6pt}YY!{\vrule width 0.6pt}YY!{\vrule width 0.6pt}YY!{\vrule width 0.6pt}YY@{}}
\toprule
\multicolumn{9}{@{}l}{\textit{B. Matched exact-output novelty and diversity}} \\
\rowcolor{tablegray}
\textbf{Method}
& \multicolumn{2}{c!{\vrule width 0.6pt}}{\textbf{MOSES}}
& \multicolumn{2}{c!{\vrule width 0.6pt}}{\textbf{ZINC250K}}
& \multicolumn{2}{c!{\vrule width 0.6pt}}{\textbf{GuacaMol}}
& \multicolumn{2}{c}{\textbf{NPGen}} \\
\rowcolor{tablegray}
& Novelty $\uparrow$ & Div. $\uparrow$
& Novelty $\uparrow$ & Div. $\uparrow$
& Novelty $\uparrow$ & Div. $\uparrow$
& Novelty $\uparrow$ & Div. $\uparrow$ \\
\midrule
\rowcolor{tablegray}
STGG+ & 0.946 & 0.845 & \textbf{1.000} & 0.860 & 0.994 & 0.867 & \textbf{0.978} & 0.883 \\
FragFM+Guidance & 0.899 & 0.846 & 0.987 & 0.852 & 0.988 & 0.858 & 0.689 & 0.860 \\
\rowcolor{tablegray}
STGG+ Count Mask & 0.952 & 0.847 & \textbf{1.000} & 0.861 & 0.994 & 0.869 & \textbf{0.978} & 0.883 \\
\rowcolor{cyan!15}
\textbf{TopoCompile (ours)} & \textbf{0.990} & \textbf{0.862} & \textbf{1.000} & \textbf{0.872} & \textbf{1.000} & \textbf{0.882} & 0.956 & \textbf{0.900} \\
\bottomrule
\end{tabularx}
\end{table}

\begin{figure*}[t]
\centering
\includegraphics[width=\textwidth]{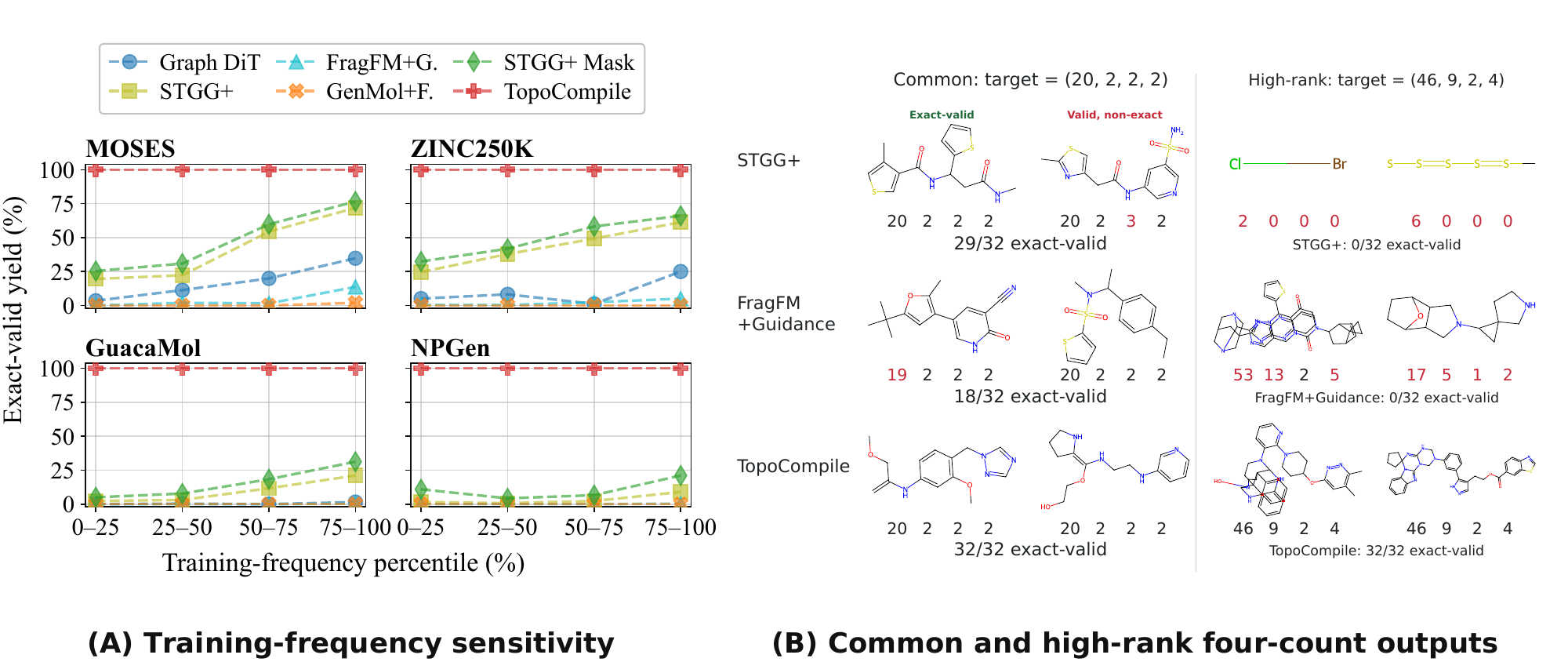}
\caption{Exact control across target frequencies and representative
outputs. (A) Exact-valid yield across training-frequency quartiles. (B)
Representative Common and high-cycle-rank outputs.}
\label{fig:frequency-response}
\end{figure*}

\paragraph{Overall performance.}
For each dataset, observed four-count tuples with $R_{\mathrm{sys}}\ge1$ are
\textbf{Common} at or above the 75th percentile of training frequency and
\textbf{Rare} at or below the 25th. We select up to 32 targets per group.
Table~\ref{tab:shared-exact-control}(A) reports Exact-valid yield over 32
attempts per target. Table~\ref{tab:shared-exact-control}(B) compares novelty
and internal diversity using equal numbers of exact-valid molecules per Common
target from each reported method's first 100 attempts; targets with fewer than
two from any method are excluded (Appendix~\ref{app:evaluation-protocols}).
Figure~\ref{fig:frequency-response}(A) ranks tuples by frequency, divides them
into quartiles, and evaluates eight targets per quartile;
Figure~\ref{fig:frequency-response}(B) shows Common and high-cycle-rank
outputs.

\textbf{As shown in Table~\ref{tab:shared-exact-control},}
\textsc{TopoCompile} achieves 100\% Exact-valid yield on both Common and Rare
four-count targets across all four datasets, whereas the strongest baseline in
each dataset reaches only 14.1--44.9\% on Rare targets. On matched Common
exact outputs, \textsc{TopoCompile} also maintains high novelty and achieves the
highest internal diversity on all four datasets. \textbf{Figure~\ref{fig:frequency-response}(A)}
shows that baseline Exact-valid yield generally increases with frequency, while
\textsc{TopoCompile} remains at 100\% across all four quartiles.
\textbf{Figure~\ref{fig:frequency-response}(B)} shows that competing methods
can produce valid molecules while missing target counts, whereas \textsc{TopoCompile}
satisfies all four, including a high-cycle-rank request beyond the MOSES
training maximum.

\subsection{RQ2: What do joint checks and learned preferences contribute?}
\label{sec:resource-results}

We conduct two ablations on MOSES and ZINC250K across five constraint schemas,
pooling Common and Rare requests. First, with the generator and sampling policy
fixed, we vary the completion check: Local rejects immediate structural or
count violations; Factorized additionally checks structural and count
completion separately, potentially using different continuations; Joint
requires one continuation satisfying both. Second, with Joint fixed, Uniform
assigns equal probabilities to legal options for each action field, Frequency
uses their training frequencies, and Learned uses Transformer probabilities.
Figure~\ref{fig:rq2-ablation}(a,b) reports Exact-valid yield, and
Figure~\ref{fig:rq2-ablation}(c,d) reports conditional FCD.
Appendix~\ref{app:resource-experiments} provides details.

As shown in Figure~\ref{fig:rq2-ablation}(a,b), Joint achieves 100\%
Exact-valid yield across all evaluated settings, versus 67.1--86.2\% for
Factorized and 12.4--56.2\% for Local, showing that separate feasibility checks
cannot replace joint completion checking. Under Joint, all three sampling
policies retain 100\% Exact-valid yield, but Learned achieves the lowest
conditional FCD in every schema (Figure~\ref{fig:rq2-ablation}(c,d)). Learned
therefore improves alignment with target-specific references, although Uniform
has higher internal diversity in some settings (Appendix~\ref{app:resource-experiments}).

\begingroup
\setlength{\intextsep}{0pt}
\begin{figure}[H]
\centering
\includegraphics[width=0.90\columnwidth,trim=0 8bp 0 8bp,clip]{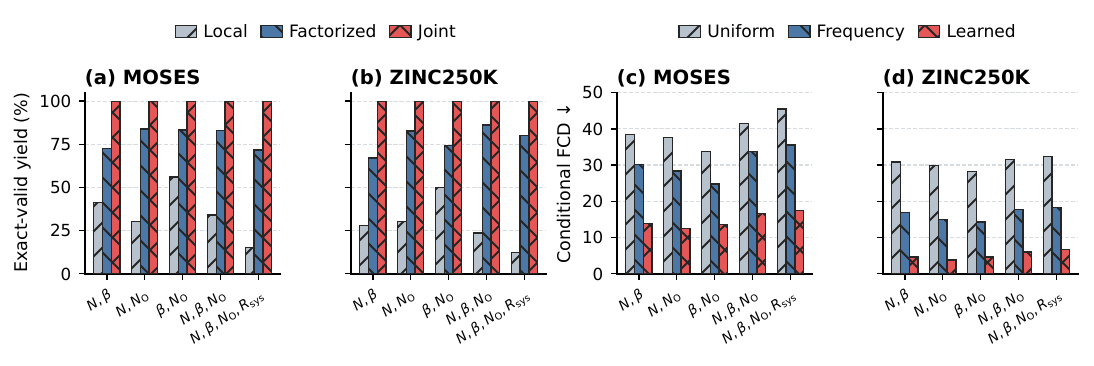}
\caption{Schema-level ablations on MOSES and ZINC250K with Common and Rare
requests pooled. \textbf{(a,b)} Exact-valid yield under Local, Factorized, and
Joint completion checks. \textbf{(c,d)} Conditional FCD under Uniform,
Frequency, and Learned sampling with Joint checking fixed.}
\label{fig:rq2-ablation}
\end{figure}
\endgroup

\subsection{RQ3: How competitive is TopoCompile in unconditional generation?}
\label{sec:controls}

\begin{table}[!t]
\centering
\caption{Unconditional molecular generation. GuacaMol reports FCD score
($\uparrow$); other datasets report FCD distance ($\downarrow$). Bold indicates
the best result in each column.}
\label{tab:standard-compact}
\scriptsize
\setlength{\tabcolsep}{2.0pt}
\renewcommand{\arraystretch}{1.06}
\begin{tabularx}{\textwidth}{@{}l!{\vrule width 0.6pt}YY!{\vrule width 0.6pt}YY!{\vrule width 0.6pt}YY!{\vrule width 0.6pt}YY@{}}
\toprule
\rowcolor{tablegray}
\textbf{Method}
& \multicolumn{2}{c!{\vrule width 0.6pt}}{\textbf{GuacaMol}}
& \multicolumn{2}{c!{\vrule width 0.6pt}}{\textbf{MOSES}}
& \multicolumn{2}{c!{\vrule width 0.6pt}}{\textbf{NPGen}}
& \multicolumn{2}{c}{\textbf{ZINC250K}} \\
\rowcolor{tablegray}
& V.U.N. $\uparrow$ & \mbox{\fontsize{5.5}{5.5}\selectfont\textbf{FCD score $\uparrow$}}
& V.U.N. $\uparrow$ & FCD $\downarrow$
& V.U.N. $\uparrow$ & FCD $\downarrow$
& Validity $\uparrow$ & FCD $\downarrow$ \\
\midrule
\rowcolor{tablegray}
DiGress \citep{vignac2023digress}
& 85.1 & 68.0 & 81.4 & 1.19 & 85.1 & 2.05 & 94.98 & 3.482 \\
DeFoG \citep{qin2025defog}
& 97.9 & 73.8 & 85.4 & 1.95 & 83.8 & 4.46 & 99.22 & 1.425 \\
\rowcolor{tablegray}
GraphAF \citep{shi2020graphaf,lee2026fragfm}
& \multicolumn{2}{c!{\vrule width 0.6pt}}{--}
& 85.2 & 0.523
& 48.1 & 25.11 & 67.92 & 16.128 \\
Tree-Invent \citep{xu2023treeinvent}
& 86.2 & 58.1 & \multicolumn{2}{c!{\vrule width 0.6pt}}{OOM}
& 75.3 & 8.20 & 95.43 & 5.999 \\
\rowcolor{tablegray}
FragFM \citep{lee2026fragfm}
& 95.0 & \textbf{85.8} & 86.9 & \textbf{0.58}
& 92.6 & 1.34 & 99.81 & 0.630 \\
\rowcolor{cyan!15}
\textbf{TopoCompile (ours)}
& \textbf{98.2} & 80.8 & \textbf{92.5} & 0.88
& \textbf{97.4} & \textbf{0.841} & \textbf{100.00} & \textbf{0.520} \\
\bottomrule
\end{tabularx}
\end{table}

As shown in Table~\ref{tab:standard-compact}, \textsc{TopoCompile} achieves the
highest V.U.N. yield on GuacaMol, MOSES, and NPGen, and the highest validity on
ZINC250K. On MOSES and NPGen, it improves V.U.N. over the strongest reported
baseline by 5.6 and 4.8 percentage points, respectively. Distributional
alignment shows a more mixed picture: \textsc{TopoCompile} achieves the best
reported FCD on NPGen and ZINC250K, while FragFM performs better on GuacaMol and
MOSES.
These results also separate the two roles in \textsc{TopoCompile}: the
component-tree generator remains competitive without constraint checks, while
the compiler is responsible for exact control.

\subsection{RQ4: How do methods compare in exact-output delivery efficiency?}
\label{sec:efficiency-results}

Figure~\ref{fig:moses-generation-efficiency}(a) measures the number of
exact-valid molecules generated within a fixed five-minute budget.
Figure~\ref{fig:moses-generation-efficiency}(b) compares the cost of adapting
to a new target and the resulting Exact-valid yield.
Figure~\ref{fig:moses-generation-efficiency}(c) measures the benefit of reusing
cached compilation results, and Figure~\ref{fig:moses-generation-efficiency}(d)
examines how generating multiple molecules for the same target amortizes the
one-time compilation cost.

As shown in Figure~\ref{fig:moses-generation-efficiency}, \textsc{TopoCompile}
generates more exact-valid molecules within the fixed time budget, demonstrating
higher end-to-end efficiency (a). For new targets, its main advantage is that
adaptation only requires compilation, with no retraining or tuning of the
generator, while filtering avoids adaptation at the cost of much lower
exact-valid yield (b). Cache reuse further reduces latency for repeated
requests by reusing previously solved feasibility states (c). Finally, more
molecules amortize a target's one-time compilation cost, but one-off requests
pay the full cost (d). A fixed-$M$ ablation shows that grouping equivalent
actions reduces median compilation time by 12.2--13.9$\times$ without changing
the DP state count (Appendix~\ref{app:resource-experiments}).

\begin{figure}[H]
\centering
\includegraphics[width=\textwidth]{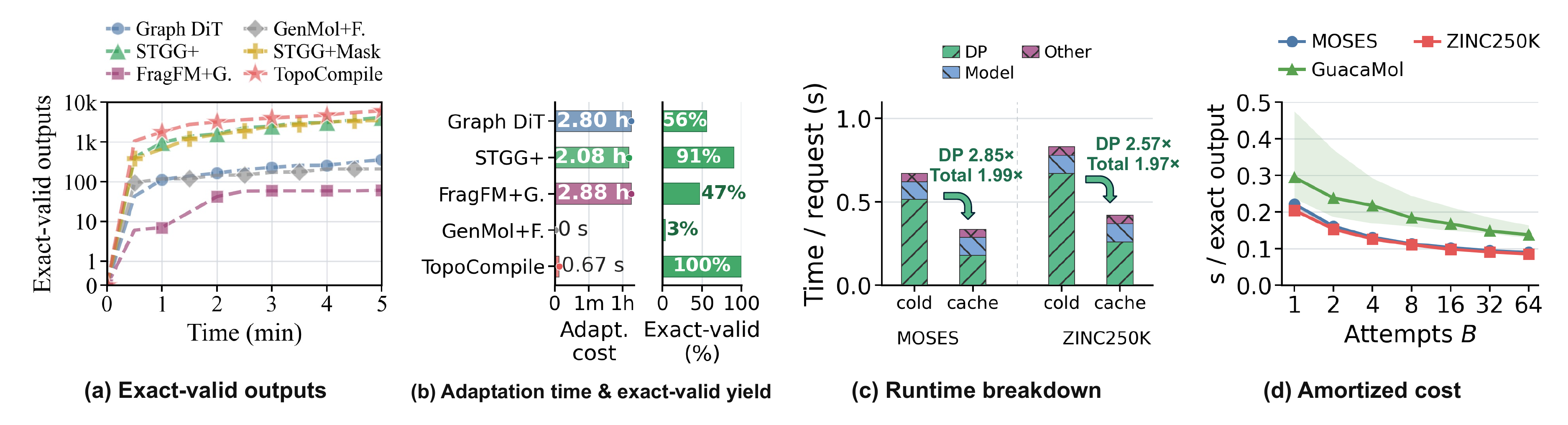}
\caption{Efficiency of exact constrained generation. (a) Exact-valid molecules
generated within a fixed time budget. (b) Target adaptation cost and Exact-valid
yield. (c) Cold and cached request time. (d) Amortized time per exact-valid
molecule as the number of samples per target increases.}
\label{fig:moses-generation-efficiency}
\end{figure}

\section{Conclusion}
\label{sec:conclusion}

We introduce TopoCompile for molecular generation under exact global count constraints. Its autoregressive component-tree generator ranks plausible additions, while dynamic-programming completion checks retain only additions that can reach the requested counts. TopoCompile satisfies supported counts exactly while retaining molecular quality and diversity, and its trained generator also produces high-quality molecules without constraints. Future work will extend exact control to connection-dependent constraints through richer completion states and broaden component support to enlarge the reachable molecular space while preserving the current exact completion guarantees.

\clearpage
\subsection*{Reproducibility Statement}

Appendix~\ref{app:additional-settings}
documents the data preprocessing, model configuration, and evaluation protocols needed
to reproduce the reported results.

\subsection*{Ethics Statement}

This work is computational and does not involve human subjects or private data.
We acknowledge that molecular generation may have downstream biomedical
applications and should be used with appropriate experimental validation.

\subsection*{AI use statement}

Generative AI tools were used only to assist with writing and language polishing,
including improving clarity, grammar, and presentation. All technical content,
methods, experiments, results, analyses, and conclusions were developed and
verified by the authors. The authors take responsibility for the final code,
experiments, claims, citations, figures, and text.

\setlength{\bibsep}{2pt plus 1pt minus 0.5pt}
\bibliography{references}
\bibliographystyle{iclr2027_conference}

\clearpage
\appendix
\setlength{\textfloatsep}{8pt plus 2pt minus 2pt}
\setlength{\floatsep}{8pt plus 2pt minus 2pt}
\setlength{\intextsep}{8pt plus 2pt minus 2pt}
\section{Proofs and Implementation Details}
\label{app:proofs}

\subsection{Bridge-component tree}
\label{app:bridge-tree-proof}

\begin{proof}
Let $G=(V,E)$ be connected, let $\Bridges$ be its bridges, and let
$C_1,\ldots,C_M$ be the connected components of
$(V,E\setminus\Bridges)$. Construct $\bar T(G)$ with one vertex per component
and one edge per bridge. Contracting connected subgraphs preserves
connectivity, so $\bar T(G)$ is connected. Suppose $\bar T(G)$ contained a
cycle and choose an edge on that cycle corresponding to bridge $(u,v)$. The
other cycle edges correspond to a sequence of components connected by bridges
without using $(u,v)$. Paths inside these components then connect $u$ to $v$
without that edge, contradicting that $(u,v)$ is a bridge. Hence $\bar T(G)$ is
acyclic and connected, and therefore is a tree with $M-1$ edges.
\end{proof}

\subsection{Exact inverse}
\label{app:inverse-proof}

\begin{proof}
The components $C_i$ partition $V$. For each $C_i$, template $\tau_i$ stores
one atom state for every $v\in C_i$ and one bond state for every edge whose
endpoints both lie in $C_i$. Every remaining edge is a bridge. Its tree-edge
record stores the two component indices, one port slot per endpoint, and the
bridge-bond state; each port slot stores the component-local endpoint atom.
The decoder first creates disjoint copies of all template atoms and internal
bonds, then adds exactly one bridge for each tree-edge record between the atoms
named by its ports. Thus every source vertex and edge is created exactly once,
and no additional vertex or edge is introduced. The source-to-decoded local
index maps define a feature-preserving graph isomorphism.

This argument applies to connected inputs after non-isomeric canonicalization
when all atom states, bond states, and port-local indices belong to the declared
representation. Stereochemical and isotopic identity is not stored. The
implementation uses RDKit 2025.09.2; different RDKit versions can assign
different component identities and are therefore outside the reproducibility
claim.
\end{proof}

\subsection{Time complexity and computational efficiency}
\label{app:time-complexity}

We analyze inference for a fixed component vocabulary and resource map.
Integer coordinates are assumed to fit in machine words, and table access
is constant-time after constructing the key. The cost of constructing
a state key is included in the vector-processing cost below.

\paragraph{DP compilation.}
Let $S$ count the structurally admissible DP states evaluated, including
one additional state for root selection, and let $G_{\max}\geq 1$ bound
the number of groups examined at any state. States rejected by an immediate
structural check are charged to the transition that produces them.

Each transition updates four port counts and $d$ resource counts.
For $d\geq 1$, this takes $O(d)$ time, including compatibility checks
and construction of the successor key. Memoization evaluates each state
once, and each evaluation examines at most $G_{\max}$ groups.
The total DP time is therefore
\[
  T_{\rm DP}=O(dSG_{\max}).
\]
This bound includes states first evaluated during sampling if compilation
is performed lazily. Such evaluations contribute to $S$ rather than being
charged as a new search at every generation step.

To bound the state count, consider a partial tree with $k$ components
left to add. Each currently open port must eventually connect to a new
component. Since one addition connects a new component to only one existing
port, a structurally feasible state must satisfy
\[
  \sum_{\ell=1}^{4}f_\ell\leq k.
\]
There are $\binom{k+4}{4}$ non-negative four-dimensional vectors satisfying
this inequality. Each residual resource coordinate also satisfies
$0\leq b_j\leq b_j^\star$. Consequently,
\begin{align}
  S
  &\leq
  1+\sum_{k=0}^{M-1}
  \binom{k+4}{4}
  \prod_{j=1}^{d}(b_j^\star+1) \nonumber\\
  &=
  1+\binom{M+4}{5}
  \prod_{j=1}^{d}(b_j^\star+1).
  \label{eq:dp-state-bound}
\end{align}
The leading one accounts for root selection.
Thus, for fixed $d$, the bound is polynomial in $M$ and the numerical
target values, but not in their binary encoding lengths.
The product over target coordinates also explains why increasing the
number of jointly controlled counts can be expensive.

This is a worst-case bound, not the number of states necessarily visited.
The actual computation follows reachable transitions and uses the pruning
rules in Appendix~\ref{app:full-compiler}.
Grouping reduces repeated work across actions with the same effect,
while memoization reduces repeated work across construction histories
that reach the same state. Neither guarantees a small state space for
every target.

\paragraph{Generation with cached completion checks.}
At generation step $t$, let $C_\theta(t)$ be the cost of computing the
candidate probabilities used by the sampler, including the prediction
heads. Let $A_t$ be the number of concrete candidate scores processed
at that step.

Once the feasibility results are cached and indexed, checking one candidate
takes constant time. Constructing the mask, normalizing the retained scores,
and sampling therefore take $O(A_t)$ time in total, not $O(1)$.
Updating the selected DP state adds $O(d)$ work.
After $M$ additions, reconstructing the atom--bond graph takes
$O(|V|+|E|)$ time. Hence, the decoding cost is bounded by
\[
  T_{\rm dec}(M)
  =
  O\!\left(
    \sum_{t=1}^{M}\bigl(C_\theta(t)+A_t+d\bigr)
    +|V|+|E|
  \right).
  \label{eq:decoding-time}
\]
The bound covers graph construction, not chemical sanitization or
benchmark evaluation.

For a fixed vocabulary, component sizes are bounded, so
$|V|+|E|=O(M)$. For fixed $d$, state updates also take $O(M)$ time
across the trajectory. These terms can therefore be absorbed into the
candidate-processing cost, since at least one candidate is processed
at every step. The substantial decoding costs are model evaluation
and handling its candidate scores.

Importantly, $G_{\max}$ and $A_t$ describe different computations.
The former bounds the grouped Boolean feasibility checks; the latter
counts the exact choices processed by the sampler.
Sharing a feasibility result does not remove the cost of scoring
different components and attachment sites.
We also retain $C_\theta(t)$ explicitly because model evaluation may
depend on the growing prefix. Having $M$ generation steps does not,
by itself, imply $O(M)$ total neural computation.

\paragraph{Preprocessing and total cost.}
Let $T_{\rm pre}$ include construction of the grouped vocabulary index
and auxiliary pruning tables when these are not already available.
These structures can be reused while the vocabulary, count definitions,
and supported component horizon remain unchanged.

Given the template annotations, forming the grouped index takes
$O(d|\mathcal V|)$ time with hashing.
The auxiliary table costs follow from their recurrences.
Let $H$ be the number of distinct bond-type vectors, $U$ the set of
distinct resource vectors, and let $P_f$ and $P_b$ count the frontier
and resource-sum entries expanded across all table levels.
A recurrence-based construction takes
\[
  T_{\rm pre}
  =
  O\!\left(
    d|\mathcal V|+H P_f+d|U|P_b
  \right).
\]
Each frontier entry is extended through the available bond-type vectors,
while each resource-sum entry is extended through $U$.
Optional table limits can reduce this preprocessing work; disabling
a pruning table does not change the exact joint recurrence.

For $B$ molecules sharing a target, $T_{\rm dec}(M)$ upper-bounds each decoding
trajectory:
\[
  T_{\rm total}(B)
  =
  T_{\rm pre}
  +O\!\left(dSG_{\max}+B\,T_{\rm dec}(M)\right).
  \label{eq:total-generation-time}
\]
The main-text expression assumes the reusable preprocessing is already
available. If it must be repeated for a new count definition or a larger
supported horizon, its cost must be included.

When the user does not specify $M$, we compile the candidate component
counts separately. Writing $S_m$ for the states evaluated for component
count $m$, a conservative bound on the total DP work is
\[
  O\!\left(
    dG_{\max}\sum_{m=1}^{M_{\max}}S_m
  \right).
\]
Generation then pays the decoding cost for each selected component count.
This bound does not assume additional sharing between the fixed-$M$
compilations.

\paragraph{Which costs dominate.}
For a new target with few requested samples, compilation can dominate.
Its cost depends on how many distinct states and transitions are explored,
and the state-space bound grows multiplicatively with the resource ranges.
Low resource dimension helps, but does not alone guarantee a cheap target.

For repeated sampling, the DP computation is shared across outputs.
Its amortized contribution is $O(dSG_{\max}/B)$, while the decoding
cost must be paid for every molecule.
With lazy compilation, $S$ may increase as new states are encountered,
but previously solved states are not recomputed.
Once the relevant cache is populated, further generation requires model
prediction and candidate processing rather than another search through
possible completions.

This differs from count-based rejection sampling.
If independent unconstrained attempts satisfy a requested target with
probability $p_{\rm hit}>0$, rejection requires $1/p_{\rm hit}$ attempts
on average for one matching output.
TopoCompile avoids this repetition due to count mismatches, provided
compilation succeeds and feasible choices have positive model probability.
Its advantage is therefore reusable exact completion checking, not a
guarantee of faster generation for every target. Expensive compilation
with little reuse can outweigh the savings from avoiding rejection.

\paragraph{Relation to measured efficiency.}
The reported MOSES scaling experiments illustrate both costs.
First compilation takes $0.034$ seconds in the one-count configuration
and $19.011$ seconds in the three-count configuration.
Their generation throughputs are respectively $7.12\%$ and $28.96\%$
below the corresponding port-only decoder.
These results show that compilation latency and the remaining sampling
overhead must be considered separately.
Figure~\ref{fig:moses-generation-efficiency}(c) assigns sampler/compiler setup, target compilation, reachability,
and cached parent-option checks to DP. The Model category combines model
execution, reconstruction, and final exactness checks; Other overhead contains the
remaining action/state processing and delivery bookkeeping. Cache reuse
reduces DP time from $0.515$ to $0.181$ seconds on MOSES and from $0.670$ to
$0.261$ seconds on ZINC250K, corresponding to $2.85\times$ and $2.57\times$
speedups. Total request time falls from $0.671$ to $0.337$ seconds and from
$0.831$ to $0.421$ seconds, respectively. The measured cold/cache Other
overhead estimates differ by less than $1\%$ within each dataset, so the figure
uses their request-weighted pooled value and moves the compensating difference
to DP, preserving every measured total.

\subsection{Representation implementation and verification}
\label{app:representation-audit}

The implementation first applies the pinned non-isomeric RDKit
canonicalization and serializes atom states, internal bond states, bridge-bond
states, component-local endpoint indices, and port symmetry classes. We
independently recompute depth-first-search bridges
\citep{tarjan1974bridges} and checks agreement with the RDKit-derived component
partition, the bridge--port bijection, and reconstruction of all stored atom
and bond attributes. Thus the validation does not rely on one ring-perception
routine. Disconnected inputs and bond states outside the representation are
excluded.

\subsection{Additivity of cycle rank}
\label{app:cycle-additivity}

\begin{proof}
The components are atom-disjoint, and assembling their bridge tree adds exactly
$M-1$ inter-component edges. Hence $|V|=\sum_i|V_i|$ and
$|E|=\sum_i|E_i|+M-1$, which gives
\begin{equation}
  \beta(G)=|E|-|V|+1
  =\sum_i\bigl(|E_i|-|V_i|+1\bigr)=\sum_i\beta(\tau_i).
  \label{eq:cycle-additivity}
\end{equation}
Thus bridge attachment creates no cross-component cycle, making cycle rank
component-additive.
\end{proof}

\subsection{Structural frontier reachability}

Let $\boldsymbol\sigma(\tau)\in\mathbb N^4$ count the attachment ports of each
bridge-bond type on template $\tau$. Starting from $\mathcal R_0=\{\mathbf0\}$,
define
\begin{equation}
  \mathbf f\in\mathcal R_{r+1}
  \iff \exists\,\ell,\tau:\ f_\ell,\sigma_\ell(\tau)\ge1,\quad
  \mathbf f'=\mathbf f+\boldsymbol\sigma(\tau)-2\mathbf e_\ell
  \in\mathcal R_r.
  \label{eq:reachability}
\end{equation}

\begin{lemma}[Frontier reachability]
\label{lem:reachability}
$\mathbf f\in\mathcal R_r$ if and only if exactly $r$ legal child actions can
consume $\mathbf f$ and terminate at the empty frontier.
\end{lemma}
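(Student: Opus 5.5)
We prove Lemma~\ref{lem:reachability} by induction on $r$, reading the recurrence in \eqref{eq:reachability} as a one-step unrolling of a legal action sequence. A ``legal child action'' from frontier $\mathbf f$ picks a bond type $\ell$ with $f_\ell\ge1$ (an open parent port of that type) and a template $\tau$ with $\sigma_\ell(\tau)\ge1$ (a child port of matching type). It then moves the frontier to $\mathbf f+\boldsymbol\sigma(\tau)-2\mathbf e_\ell$, which is exactly the port part of $\operatorname{Next}$ in \eqref{eq:tc-compact-next}. The claim is that $\mathbf f\in\mathcal R_r$ if and only if some sequence of exactly $r$ such actions, each legal at the frontier it is applied to, ends at $\mathbf 0$.

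\textbf{Base case $r=0$.} Zero actions leave the frontier unchanged, so they terminate at the empty frontier exactly when $\mathbf f=\mathbf 0$, i.e.\ when $\mathbf f\in\mathcal R_0=\{\mathbf 0\}$.

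\textbf{Forward direction of the inductive step.} Suppose $\mathbf f\in\mathcal R_{r+1}$. By \eqref{eq:reachability} there are $\ell,\tau$ with $f_\ell,\sigma_\ell(\tau)\ge1$ and $\mathbf f'=\mathbf f+\boldsymbol\sigma(\tau)-2\mathbf e_\ell\in\mathcal R_r$.
\begin{itemize}
\item The pair $(\ell,\tau)$ is a legal first action. An open port of type $\ell$ exists because $f_\ell\ge1$, and $\tau$ has a matching child port because $\sigma_\ell(\tau)\ge1$.
\item $\mathbf f'$ is a genuine frontier, i.e.\ in $\mathbb N_0^4$. Coordinate $\ell$ equals $f_\ell+\sigma_\ell(\tau)-2\ge0$, and all other coordinates only increase.
\item By the induction hypothesis, $\mathbf f'$ admits $r$ legal actions ending at $\mathbf 0$. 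Prepending $(\ell,\tau)$ gives $r+1$ legal actions from $\mathbf f$ ending at $\mathbf 0$.
\end{itemize}

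\textbf{Reverse direction of the inductive step.} Suppose $r+1$ legal actions take $\mathbf f$ to $\mathbf 0$. Legality of the first action $(\ell,\tau)$ gives $f_\ell\ge1$ and $\sigma_\ell(\tau)\ge1$, and it moves the frontier to $\mathbf f'$ as above. The remaining $r$ actions take $\mathbf f'$ to $\mathbf 0$, so $\mathbf f'\in\mathcal R_r$ by the induction hypothesis. Hence $\mathbf f\in\mathcal R_{r+1}$ by definition.

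\textbf{Main obstacle.} The step needing care is showing that the abstract frontier vector faithfully represents concrete partial trees. Two facts are required.
\begin{itemize}
\item Legality of an action depends only on $\mathbf f$. The compiler's compatibility rule is bond-type equality, so which open port of type $\ell$ is chosen, and which child port of $\tau$ of that type is used, does not affect legality.
\item Every concrete choice produces the same frontier update. The new component contributes $\boldsymbol\sigma(\tau)$ open ports, and the bridge consumes one open port of type $\ell$ on each side, giving $-2\mathbf e_\ell$ regardless of the choice.
\end{itemize}
With these two facts, a concrete legal sequence projects to an abstract one, and an abstract one lifts to a concrete one by picking any port of the required type at each step. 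The induction then transfers between the two views.

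Finally, count arguments are deliberately absent: the lemma concerns structure only, so no condition on $\mathbf b$ enters.
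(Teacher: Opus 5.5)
Your proof is correct and follows the paper's argument. Both argue by induction on $r$: the base case uses $\mathcal R_0=\{\mathbf 0\}$, the forward direction prepends the recurrence-supplied action $(\ell,\tau)$ to the $r$-step completion given by the induction hypothesis, and the reverse direction peels off the first legal action. Your extra checks, that $\mathbf f'\in\mathbb N_0^4$ and that the type-level frontier faithfully summarizes concrete port choices, make explicit details the paper leaves implicit.
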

\begin{proof}
We prove the lemma by induction on $r$. For $r=0$, no child action is available
and the frontier can finish if and only if it is $\mathbf0$. Assume the result
for $r$. If $\mathbf f\in\mathcal R_{r+1}$, the recurrence supplies a type
$\ell$ and template $\tau$ whose attachment reaches
$\mathbf f'\in\mathcal R_r$; the inductive hypothesis supplies the remaining
$r$ actions. Conversely, any legal first child action that admits an $r$-action
completion reaches such a $\mathbf f'$, which the inductive hypothesis places
in $\mathcal R_r$. The recurrence therefore places $\mathbf f$ in
$\mathcal R_{r+1}$.
\end{proof}

\subsection{Joint feasibility DP: soundness and relative completeness}
\label{app:resource-proof}

Fix a finite vocabulary $\mathcal V$, the typed-port transition system in
Equation~\ref{eq:tc-compact-next}, and an admissible map
$r:\mathcal V\to\mathbb N^d$. Equation~\ref{eq:tc-compact-feasibility} defines
$F_{\mathbf c^\star}$ by its terminal condition and Boolean recurrence.

\begin{proof}[Soundness]
For soundness, induct on $k$. The $k=0$ state accepts only
$(\mathbf0,\mathbf0)$. For $k>0$, every retained action subtracts exactly
$r(\tau)$ and takes the state to a reachable $(k-1)$ suffix. The induction
hypothesis closes that suffix with zero frontier and resource residual.
Telescoping the transitions therefore consumes the initial frontier and sums
the selected component contributions to the initial $\mathbf b$. Applying the root check
with $\mathbf b=\mathbf b^\star-r(\tau_1)$ proves exact target equality and
structural completion for every sequence accepted by the joint DP.
\end{proof}

\begin{proof}[Relative completeness]
For relative completeness, consider a feasible $k$-component suffix in the
grammar. Its first component and attachment induce a legal non-negative successor,
whose remaining $k-1$ components form a feasible suffix. Induction gives
$F_{\mathbf c^\star}=1$ for that successor, so the recurrence retains the first
action;
repetition retains the whole sequence. The result is relative to
the finite catalog and transition language. It assumes that every admissible
resource is component-local, non-negative and integer-valued, and unchanged by
bridge attachment. It neither creates support for unseen component interiors nor
guarantees sanitization, synthesis, or learned probability mass. We report
infeasible catalog targets before sampling.
\end{proof}

\subsection{Joint feasibility DP: full algorithm}
\label{app:full-compiler}

For implementation, let $\Gamma_{\rm comp}$ collect the distinct
$(\boldsymbol\sigma(\tau),r(\tau))$ pairs, $\Gamma_\ell$ those with an available
type-$\ell$ port, and $\mathcal U$ the unique resource vectors used below:
\begin{equation}
  \mathcal B_0=\{\mathbf0\},\qquad
  \mathcal B_k=\{\mathbf z+\mathbf u:
  \mathbf z\in\mathcal B_{k-1},\ \mathbf u\in\mathcal U\}.
  \label{eq:resource-only-dp}
\end{equation}
Algorithm~\ref{alg:product-compilation-full} gives the complete execution order
for terminal checks, structural and resource pruning, compatible-group
enumeration, and memoization. If the auxiliary exact-sum cache reaches its cap,
$\operatorname{SumReach}$ conservatively returns \textsc{true}.

\begin{algorithm}[H]
\small
\DontPrintSemicolon
\SetKwProg{Fn}{Function}{:}{}
\SetKw{Continue}{continue}
\caption{Joint feasibility DP with safe pruning}
\label{alg:product-compilation-full}
\KwData{target $(M,\mathbf b^\star)$; catalog index
$\mathcal I=(\Gamma_{\rm tok},\{\Gamma_\ell\}_{\ell=1}^{4},
\mathbf r_{\min},\mathbf r_{\max})$; structural tables
$\{\mathcal R_k\}_{k=0}^{M-1}$}
\KwResult{feasible root groups $\Gamma_{\mathrm{root}}$}
$\mathrm{memo}\leftarrow\emptyset$\;
\Fn{$\operatorname{CanFinish}(k,\mathbf f,\mathbf b)$}{
  $z\leftarrow(k,\mathbf f,\mathbf b)$\;
  \If{$z\in\mathrm{memo}$}{\KwRet $\mathrm{memo}[z]$\;}
  \If{$k=0$}{
    $\mathrm{memo}[z]\leftarrow
      [\mathbf f=\mathbf0\land\mathbf b=\mathbf0]$\;
    \KwRet $\mathrm{memo}[z]$\;
  }
  \If{$k<0\lor\mathbf f\notin\mathcal R_k$}{
    $\mathrm{memo}[z]\leftarrow\textsc{false}$; \KwRet \textsc{false}\;
  }
  \If{$k\mathbf r_{\min}\npreceq\mathbf b\lor
       \mathbf b\npreceq k\mathbf r_{\max}$}{
    $\mathrm{memo}[z]\leftarrow\textsc{false}$; \KwRet \textsc{false}\;
  }
  \If{$\neg\operatorname{SumReach}(k,\mathbf b)$}{
    $\mathrm{memo}[z]\leftarrow\textsc{false}$; \KwRet \textsc{false}\;
  }
  $\mathrm{memo}[z]\leftarrow\textsc{false}$\;
  \ForEach{$\ell$ such that $f_\ell>0$}{
    \ForEach{$\gamma\in\Gamma_\ell$}{
      $\mathbf b'\leftarrow\mathbf b-\mathbf r_\gamma$\;
      \If{$\mathbf b'\not\succeq\mathbf0$}{\Continue\;}
      $\mathbf f'\leftarrow
        \mathbf f+\boldsymbol\sigma_\gamma-2\mathbf e_\ell$\;
      \If{$\mathbf f'\not\succeq\mathbf0\lor
           \mathbf f'\notin\mathcal R_{k-1}$}{\Continue\;}
      \If{$\operatorname{CanFinish}(k-1,\mathbf f',\mathbf b')$}{
        $\mathrm{memo}[z]\leftarrow\textsc{true}$; \KwRet \textsc{true}\;
      }
    }
  }
  \KwRet \textsc{false}\;
}
$\Gamma_{\mathrm{root}}\leftarrow\emptyset$\;
\ForEach{$\gamma\in\Gamma_{\rm tok}$}{
  $\mathbf b'\leftarrow\mathbf b^\star-\mathbf r_\gamma$\;
  \If{$\mathbf b'\succeq\mathbf0\land
       \boldsymbol\sigma_\gamma\in\mathcal R_{M-1}\land
       \operatorname{CanFinish}
       (M-1,\boldsymbol\sigma_\gamma,\mathbf b')$}{
    $\Gamma_{\mathrm{root}}\leftarrow
      \Gamma_{\mathrm{root}}\cup\{\gamma\}$\;
  }
}
\KwRet $\Gamma_{\mathrm{root}}$\;
\end{algorithm}

If the auxiliary cache reaches 1,000,000 states at any level, the implementation
disables only this optional pruning check and continues with the exact joint DP;
this can increase computation but does not remove feasible actions. Reaching
the 2,000,000-state limit of the joint DP instead raises an explicit compilation error
rather than returning an incomplete feasible set. Let $\mathcal F_k$ be the
possible frontier vectors with $k$
components remaining. The target-conditioned state count satisfies
\begin{equation}
  S\le\sum_{k=0}^{M}|\mathcal F_k|
  \prod_{j=1}^{d}(b_j^\star+1),
  \label{eq:product-complexity}
\end{equation}
so the joint DP uses $O(SG_{\max}d)$ worst-case time and $O(S)$ memo memory,
where $G_{\max}$ is the maximum number of compatible groups examined at one
state. Its dependence on target magnitudes and resource dimension is
pseudo-polynomial.
\FloatBarrier

\section{Additional experimental settings}
\label{app:additional-settings}

\subsection{Datasets and preprocessing}
\label{app:data-preprocessing}

We evaluate \textsc{TopoCompile} on MOSES \citep{polykovskiy2020moses},
ZINC250K, GuacaMol \citep{brown2019guacamol}, and NPGen
\citep{lee2026fragfm}. For \textsc{TopoCompile}, preprocessing uses RDKit
2025.09.2 and retains connected molecules within the supported non-isomeric
atom--bond domain. Stereochemical and isotopic identities are not represented.
The component vocabulary is constructed from training molecules only, while
representation coverage is evaluated separately on the test sets. Before
selecting exact-control targets, we remove cross-split molecular overlaps.
Molecular identity in the exact-control evaluation is defined by canonical
non-isomeric connected SMILES, consistently with the training-exclusion set.

\subsection{Model architecture and training}
\label{app:model-training}

\paragraph{Architecture.}
The action Transformer has eight layers, eight attention heads, hidden size
256, and feed-forward size 1,024. Atom and bond embeddings have dimensions
256 and 128, respectively, and component encoding uses a five-layer fragment
encoder. Four prediction heads model the bond-type vector, exact component,
parent port, and child port.

\paragraph{Training.}
We use AdamW with coefficients $(0.9,0.999)$, learning rate $5\times10^{-4}$,
10,000 warm-up updates, no weight decay, exponential moving average decay
0.999, and gradient clipping at 8. Each model is trained for 30,000 updates
with an effective batch size of 256 and eight data workers. All local training
runs reported in this paper use a single Tesla H800 PCIe GPU, and all
experiments report averages over three runs with seeds 123, 456, and 789.

Each randomized rooted traversal produces a ground-truth action sequence.
Under teacher forcing, the model receives the ground-truth partial tree and
predicts the corresponding next action. Parent candidates are unused ports in
the partial tree, component candidates are restricted to the selected
bond-type vector, and child ports must match the parent bond type. The loss
weights for the bond vector, component, parent port, and child port are
$(1,5,1,1)$; only bond-vector and component losses apply at the root. Structural
reachability masks are used during both training and sampling.

\paragraph{Unconditional and constrained generation.}
For unconditional generation, \textsc{TopoCompile} first samples the component
count $M$ from its empirical distribution over the training molecules and
then generates a molecule autoregressively conditioned on $M$. Exact
constrained generation reuses the same checkpoint and applies the joint DP
during sampling, without retraining the generator.

\subsection{Baseline implementations}
\label{app:baseline-implementations}

\paragraph{Graph DiT.}
We use the released conditional diffusion checkpoint of Graph DiT
\citep{liu2024graphdit} and provide the active resource counts as conditioning
inputs for the primary exact-control comparison. When a request specifies
heavy-atom count $N$, we use the requested node count to constrain graph size;
otherwise, we retain the model's learned node-count distribution. We preserve
the released connection repair, valence repair, and largest-component handling
before converting the output to a molecule. The resulting molecule is assessed
by the shared external validity and exact-count evaluator. Each target uses the
fixed attempt budget, including generation, repair, and count-mismatch failures.
Graph DiT receives requested counts without stepwise completion checks.

\paragraph{STGG+ and STGG+ Count Mask.}
We use the released property-conditional spanning-tree generator STGG+
\citep{jolicoeurmartineau2025anyproperty}, conditioning on the active counts
in the primary comparison. Unspecified properties are handled by its native
missing-property mask, and its structural and valence masks are retained.
Guidance strength is selected on separate validation targets. Each attempt
uses one internal candidate, without best-of-$k$ selection or additional
resampling.

STGG+ Count Mask uses the same checkpoint, conditions, guidance, and sampling
budget, adding only a process-local, token-level count mask around the decoder.
After each accepted token, the mask updates the accumulated heavy-atom count,
oxygen count, and cycle rank. Candidate tokens that would exceed any active
target count are masked, and EOS is allowed only when these three counts match
their targets. This mask is combined with the native structural and valence
masks; it performs local count control without checking future completion
feasibility. An empty action set terminates the attempt as a failure. Both
variants use the shared external evaluator. 

\paragraph{FragFM+Guidance.}
We use the released FragFM molecular generator \citep{lee2026fragfm} with a
scalar property predictor for each active resource count. For a requested
target, predictor deviations are normalized using validation-set scales, and
the resulting prediction energies are combined to guide both fragment-bag
generation and molecular graph generation. Bag and graph guidance strengths
are selected separately for each dataset using validation targets. We retain
the official fragment reconstruction procedure without prescribing a
component count, tree layout, or exact completion check. The shared external
validity and exact-count evaluator then assesses each molecule.

\paragraph{GenMol+Filter.}
We use the GenMol V2 unconditional de novo checkpoint
\citep{lee2025genmol} without supplying target counts to the model. For each
target, GenMol independently proposes molecules, and the shared validity and
count checks determine which proposals are exact-valid. There is no retraining,
guidance, or target-specific decoding. Each proposal counts as one attempt,
including invalid outputs and filter rejections; we do not generate replacement
proposals to fill a quota of valid or exact outputs. This implements generation
followed by filtering under a fixed proposal budget.

\paragraph{Tree-Invent.}
For unconditional generation, we use the paper-designated RBFS traversal of
Tree-Invent \citep{xu2023treeinvent}. The GuacaMol row is taken from the
original paper's 50,000-sample evaluation, with V.U. and V.U.N. derived from
its reported validity, uniqueness, and novelty. For MOSES, ZINC250K, and NPGen,
we train the released implementation from scratch for up to six
passes over a deterministic 95\% partition of the training set, reserving the
remaining 5\% for validation. Batch sizes are 320, 144, and 64, respectively,
instead of the released default of 1,000. Each run has an eight-hour training
limit and a four-hour generation-and-evaluation limit. ZINC250K and NPGen
complete, whereas MOSES fails before the first optimizer update and is reported
as OOM. Unconditional-generation tables distinguish local reproductions from
published results.

\paragraph{Other unconditional-generation baselines.}
The unconditional-generation tables also include published results for DiGress
\citep{vignac2023digress}, DeFoG \citep{qin2025defog}, GraphAF
\citep{shi2020graphaf}, and other cited generators. These entries retain their
source attributions and are distinguished from our local Tree-Invent
reproductions.

\paragraph{Guidance selection.}
We select configurations on 32 fixed validation triples disjoint from the
final evaluation targets, using 32 attempts per target. Selection maximizes
exact-valid yield, with validity and exact-output diversity as tie breakers.
For STGG+, we search guidance strengths $\{1.0,1.2,1.5,2.0\}$ at temperature
1. The selected strength is 1.5 on MOSES and GuacaMol and 2.0 on ZINC250K and
NPGen; Count Mask uses the same setting. For FragFM+Guidance, we search
bag/graph strength pairs $(0,2)$, $(0.2,2)$, $(0.4,2)$, and $(1,20)$.
Dataset-specific selection yields $(1,20)$ on all four datasets. No final
evaluation target is used to tune these configurations.

\paragraph{Shared comparison protocol.}
All exact-control methods use the same Common/Rare target lists, main attempt
budgets, external validity and exact-count evaluator, and denominator rules.
Each variant retains its native generation mechanism with only the control
adaptations described above. The comparison therefore measures exact-valid
yield under shared requests and sampling budgets rather than imposing the same
internal architecture or constraint mechanism on every method.

\subsection{Evaluation protocols}
\label{app:evaluation-protocols}

\paragraph{Metric definitions.}
For fixed attempts, Valid is the fraction of chemically valid molecules, and
Exact-valid additionally requires every requested count. For an exact-output
pool, Novelty is the fraction of unique canonical identities absent from
training; internal diversity is one minus mean pairwise Tanimoto similarity of
radius-2, 1024-bit Morgan fingerprints \citep{rogers2010extended}. Conditional
FCD is the Fr\'echet ChemNet distance \citep{preuer2018fcd} between matched
exact outputs and target-matched references. For unconditional generation,
Validity is the valid fraction under each benchmark denominator, and V.U.N. is
validity $\times$ uniqueness $\times$ novelty. FCD compares ChemNet embeddings
of valid outputs with the benchmark reference set; GuacaMol reports the
corresponding score, where higher is better.

\paragraph{Unconditional-generation budgets and replication.}
MOSES and ZINC250K each use 25,000 generation attempts, and NPGen uses 30,000.
GuacaMol uses an 11,000-molecule pool to construct its 10,000-molecule report.
\textsc{TopoCompile} results on MOSES, ZINC250K, GuacaMol, and NPGen average
three independently trained models. Each table or caption states its repetition
unit.

\paragraph{Benchmark-specific evaluation.}
Validity on MOSES, ZINC250K, and NPGen uses all attempts. GuacaMol validity
uses the first 10,000 attempts; its FCD uses the first 10,000 pooled-valid
molecules, while KL, novelty, and V.U.N. use the first 10,000 pooled unique
molecules. Other FCD and ring metrics use valid canonical molecules. MOSES
FCD/TestSF uses the 176,225-molecule scaffold test split, and GuacaMol uses
its training reference and published score definition. ZINC250K uses the
final 5,000 test molecules with a local implementation of the benchmark
evaluator. NPGen uses its released test embedding.

\paragraph{Exact-control requests and yields.}
The primary comparison evaluates $(N,\beta,N_{\mathrm O},R_{\rm sys})$
without supplying $M$; lower-dimensional schemas are reported in the
appendix. \textsc{TopoCompile} checks supported candidate
component counts, retains those admitting a compiler-feasible completion, and
samples from the restricted empirical prior $p(M\mid N)$ when $N$ is given, or
$p(M)$ otherwise. Exact-control experiments use frozen checkpoints and fixed
sampling streams, with $K=32$ attempts per target unless stated otherwise.

Exact-valid yield (Exact) is the fraction of attempts producing valid molecules
that satisfy all requested counts. Novel unique exact yield (NUE) counts the
exact-valid canonical molecules that are absent from training and deduplicated
within each target and sampling run, using the same attempt denominator. We
average target yields by schema and tier, then macro-average schemas.

\paragraph{Matched-output quality.}
For the main exact-control quality comparison, we use preselected Common
targets and the first 100 attempts per target. Compared methods contribute
equal-sized exact-output pools for each target, with survivors selected in
attempt order. A target is excluded when any compared method provides fewer
than two exact outputs. Novelty is the fraction of unique matched identities
absent from training. Internal diversity is one minus the mean pairwise
Tanimoto similarity of radius-2, 1024-bit Morgan fingerprints. Graph DiT and
GenMol+Filter are unreported where their exact outputs do not support a
compatible matched pool.

\paragraph{Ring-based metrics.}
For a parsed molecular graph, cycle rank is $|E|-|V|+c$, where $c$ is the
number of connected components and equals one in the supported compiler domain.
Ring systems follow the maximal non-bridge component definition used by the
representation, and ring size is the multiset of RDKit atom-ring sizes. The
topology signature comprises nonexclusive fused, spiro, and bridgehead
indicators: fused denotes a bond belonging to multiple perceived atom rings,
while spiro and bridgehead indicators use RDKit's canonical counters. Core
identity is the port-free canonical component SMILES, and a core pair is an
unordered pair of core identities within one molecule. For each of these six
statistics, we compute $S_k=1-\operatorname{JSD}_{\mathrm{nats}}(P_k,Q_k)/\log 2$;
RingSF is their mean.

\paragraph{NPGen proxy metrics.}
NPGen's NP-score and classifier outputs are learned proxies. Its released
classifier KLD discards generated-only labels and renormalizes on reference
support. We retain this definition for benchmark compatibility and also report
KLD over the union support after probability normalization, together with
generated-only mass. The reported benchmark proxy values reproduce within
$10^{-6}$ using the released weights and source files.

\subsection{Representation and compiler verification}
\label{app:implementation-verification}

We verify the implementation on all 4,243,193 connected records retained after
preprocessing the training, validation, and test splits of the four datasets.
Independent Tarjan bridge detection \citep{tarjan1974bridges} is checked against
RDKit-derived ring grouping. The checks cover component-node types,
component-tree connectivity and edge count, cycle localization, the bridge--port
bijection, template identity, feature-preserving graph reconstruction, and
catalog consistency. No violations are found. Testing 4,096 atom-order
permutations confirms preservation of topology, ports, and reconstruction.

The joint DP is independently checked against exhaustive enumeration on
five small vocabularies for each resource dimension $d\in\{1,2,3\}$. These
comparisons remove no feasible completions and retain no infeasible completions.
Independent recounting finds no count mismatch in 10,000 MOSES and 10,000
ZINC250K molecules.

\FloatBarrier
\section{Exact-Control Settings and Schema-Specific Results}
\label{app:shared-control}
\renewcommand{\thetable}{S\arabic{table}}
\renewcommand{\theHtable}{supplement.\arabic{table}}
\setcounter{table}{0}
\renewcommand{\thefigure}{S\arabic{figure}}
\renewcommand{\theHfigure}{supplement.\arabic{figure}}
\setcounter{figure}{0}

This section specifies how we construct exact-control requests, configure the
compared methods, and compute the reported success and quality metrics. It then
presents the schema-specific results and examines how exact-control performance
changes with target frequency.

\paragraph{Requests and target groups.}
Each request specifies a subset of heavy-atom count $N$, cycle rank $\beta$,
oxygen count $N_{\rm O}$, and ring-system count $R_{\rm sys}$, but does not
specify the component count $M$.
TopoCompile therefore enumerates candidate values of $M$, retains those with a
compiler-feasible completion, and samples from $p(M\mid N)$ when the request specifies $N$
or from $p(M)$ otherwise. We remove cross-split molecular overlaps before
selecting targets, so the resulting Common and Rare groups preserve their
observed target identities without sharing molecules across splits. For the
frequency analysis, we assign equal-frequency tuples their average rank among
all distinct targets. Figure~\ref{fig:frequency-response}(A) divides these ranks
into four percentile ranges and selects eight four-count tuples, spaced
uniformly by rank, from each range.

\paragraph{Baselines and configuration selection.}
We compare TopoCompile with Graph DiT, STGG+, FragFM+Guidance,
GenMol+Filter, and STGG+ Count Mask under a shared external count
check. Graph DiT forces the requested node count only for requests that specify $N$ and
retains its released connection/valence repair and largest-component handling.
STGG+ retains its missing-property and structural/valence masks, whereas
FragFM+Guidance applies normalized predictor energies for the active properties
during bag and graph generation. GenMol+Filter instead generates unconditional
candidates and filters them independently. Finally, STGG+ Count Mask reuses the STGG+
checkpoint and blocks count overshoots and premature EOS; an exhausted action
set terminates as a failed attempt. Because guidance strength directly affects
these comparisons, we select STGG+ guidance from
$\{1.0,1.2,1.5,2.0\}$ at temperature 1 and FragFM bag/graph strengths from
$(0,2)$, $(0.2,2)$, $(0.4,2)$, and $(1,20)$ on separate validation targets.
Selection maximizes exact-valid yield at a fixed candidate budget, using
validity and exact-output diversity as tie breakers. We select one configuration
per dataset, and STGG+ uses one internal candidate per attempt. This procedure
selects STGG+ guidance 1.5 for MOSES and GuacaMol and 2.0 for ZINC250K and
NPGen; STGG+ Count Mask uses the corresponding STGG+ setting. FragFM uses bag/graph
strengths 1/20 on every dataset, selected using 32 attempts on each of 32 fixed
validation triples disjoint from the final requests.

\paragraph{Metrics, denominators, and aggregation.}
We first evaluate every generated molecule with the same external validity and
count checks. Valid yield is the fraction of attempts that produce a valid
molecule, while Exact-valid yield is the fraction that are both valid and exact;
both metrics use the full attempt count as their denominator. Consequently, an
invalid output remains in the denominator, whereas a request that cannot finish
compilation produces no sampled candidates; we record it separately as
request noncompletion. We also distinguish state-limit exhaustion, timeout,
and certified infeasibility. Molecular identity throughout is canonical
non-isomeric connected SMILES, consistent with the training exclusion set.
After computing the joint events per output, we average targets within each
schema and macro-average schemas where reported. For matched quality, we use
the same preselected Common targets and the first 100 attempts per target.
Every method contributes the same number of exact survivors in attempt order,
and we exclude a target when any compared method supplies fewer than two.
Novelty is then the fraction of unique matched identities absent from training,
and internal diversity is one minus mean pairwise Tanimoto similarity using
radius-2, 1024-bit Morgan fingerprints. Graph DiT and GenMol+Filter remain
unreported when their exact-output counts do not support a compatible pool.

All selected TopoCompile requests entered sampling and completed the full
evaluation pipeline.
Tables~\ref{tab:exact-control-n}--\ref{tab:exact-control-beta-oxygen} provide
lower-dimensional complements to the main four-count comparison for three
schemas with near-complete comparison grids. The sparser $N_{\rm O}$, $(N,\beta)$, and
$(N,N_{\rm O})$ grids are not repeated as standalone tables.
All four tables use the fixed sampling protocol. Panel A reports Valid and Exact-valid in
percent for the first $K=32$ Common and Rare attempts. In Tables~\ref{tab:exact-control-n}--
\ref{tab:exact-control-beta-oxygen}, Panel B reports Novelty and internal diversity
from matched exact outputs among the first 100 Common attempts;
Table~\ref{tab:four-count-extension} uses the same budget. For the four-count table, the
MOSES/ZINC250K/GuacaMol/NPGen comparable-target and matched-sample counts are
32/690, 32/374, 32/172, and 15/45, respectively. In
MOSES/ZINC250K/GuacaMol/NPGen order, dashes denote missing entries and the
comparable target/sample counts are 6/122, 9/104, 20/127, and 23/127 for $N$; 5/434,
5/396, 5/257, and 7/345 for $\beta$; and 18/1,179, 21/1,177, 32/840, and
32/429 for $(\beta,N_{\rm O})$.
\renewcommand{\thetable}{S1a}
\begin{table}[H]
\centering
\caption{Numbers of shared Common and Rare targets by dataset and constraint
schema.}
\label{tab:shared-targets}
\scriptsize
\setlength{\tabcolsep}{2.5pt}
\begin{tabular*}{\textwidth}{@{\extracolsep{\fill}}lccccccc@{}}
\toprule
Dataset & $N$ & $\beta$ & $N_{\rm O}$ & $(N,\beta)$ & $(N,N_{\rm O})$
& $(\beta,N_{\rm O})$ & $(N,\beta,N_{\rm O})$ \\
\midrule
MOSES & 6/6 & 5/4 & 6/6 & 26/26 & 32/32 & 18/18 & 32/32 \\
ZINC250K & 9/9 & 5/4 & 6/5 & 32/32 & 32/32 & 21/22 & 32/32 \\
GuacaMol & 21/21 & 5/5 & 8/8 & 32/32 & 32/32 & 32/32 & 32/32 \\
NPGen & 25/25 & 7/7 & 13/13 & 32/32 & 32/32 & 32/32 & 32/32 \\
\bottomrule
\end{tabular*}
\end{table}

\renewcommand{\thetable}{S2a}
\begin{table}[H]
\centering
\caption{Valid and E.V (Exact-valid) yields (\%) for heavy-atom count $N$.}
\label{tab:exact-control-n}
\scriptsize
\setlength{\tabcolsep}{0.7pt}
\renewcommand{\arraystretch}{0.94}
\begin{tabularx}{\textwidth}{@{}l!{\vrule width 0.6pt}YYYY!{\vrule width 0.6pt}YYYY!{\vrule width 0.6pt}YYYY!{\vrule width 0.6pt}YYYY@{}}
\toprule
\textbf{Method}
& \multicolumn{4}{c!{\vrule width 0.6pt}}{\textbf{MOSES}}
& \multicolumn{4}{c!{\vrule width 0.6pt}}{\textbf{ZINC250K}}
& \multicolumn{4}{c!{\vrule width 0.6pt}}{\textbf{GuacaMol}}
& \multicolumn{4}{c}{\textbf{NPGen}} \\
\rowcolor{tablegray}
& \multicolumn{2}{c}{\textit{Common}} & \multicolumn{2}{c!{\vrule width 0.6pt}}{\textit{Rare}}
& \multicolumn{2}{c}{\textit{Common}} & \multicolumn{2}{c!{\vrule width 0.6pt}}{\textit{Rare}}
& \multicolumn{2}{c}{\textit{Common}} & \multicolumn{2}{c!{\vrule width 0.6pt}}{\textit{Rare}}
& \multicolumn{2}{c}{\textit{Common}} & \multicolumn{2}{c}{\textit{Rare}} \\
\rowcolor{tablegray}
& Valid & E.V & Valid & E.V & Valid & E.V & Valid & E.V
& Valid & E.V & Valid & E.V & Valid & E.V & Valid & E.V \\
\midrule
Graph DiT & 67.19 & 67.19 & 31.77 & 31.77 & 61.81 & 61.81 & 46.18 & 46.18 & 12.80 & 12.80 & 8.48 & 8.48 & 44.63 & 44.63 & 35.88 & 35.88 \\
STGG+ & 100.00 & 93.06 & 100.00 & 21.01 & 100.00 & 92.71 & 100.00 & 30.90 & 100.00 & 77.98 & 100.00 & 7.14 & 100.00 & 61.63 & 99.88 & 10.88 \\
FragFM+Guidance & 97.92 & 26.82 & 40.89 & 4.43 & 100.00 & 15.97 & 97.22 & 1.74 & 93.30 & 5.51 & 91.67 & 0.15 & 98.25 & 5.13 & 92.50 & 0.13 \\
GenMol+Filter & 100.00 & 16.67 & 100.00 & 0.00 & 100.00 & 11.11 & 100.00 & 0.00 & 100.00 & 5.80 & 100.00 & 0.00 & 100.00 & 3.88 & 100.00 & 0.00 \\
STGG+ Count Mask & 96.35 & 96.35 & 24.48 & 24.48 & 93.40 & 93.40 & 46.53 & 46.53 & 83.18 & 83.18 & 25.15 & 25.15 & 70.00 & 70.00 & 27.13 & 27.13 \\
\rowcolor{cyan!15}
\textbf{TopoCompile (ours)} & \textbf{100.00} & \textbf{100.00} & \textbf{100.00} & \textbf{100.00} & \textbf{100.00} & \textbf{100.00} & \textbf{100.00} & \textbf{100.00} & \textbf{100.00} & \textbf{100.00} & \textbf{100.00} & \textbf{100.00} & \textbf{100.00} & \textbf{100.00} & \textbf{100.00} & \textbf{100.00} \\
\bottomrule
\end{tabularx}

\vspace{1.5pt}
\begin{tabularx}{\textwidth}{@{}l!{\vrule width 0.6pt}YY!{\vrule width 0.6pt}YY!{\vrule width 0.6pt}YY!{\vrule width 0.6pt}YY@{}}
\toprule
\multicolumn{9}{@{}l}{\textit{B. Matched exact-output novelty and diversity}} \\
\rowcolor{tablegray}
\textbf{Method}
& \multicolumn{2}{c!{\vrule width 0.6pt}}{\textbf{MOSES}}
& \multicolumn{2}{c!{\vrule width 0.6pt}}{\textbf{ZINC250K}}
& \multicolumn{2}{c!{\vrule width 0.6pt}}{\textbf{GuacaMol}}
& \multicolumn{2}{c}{\textbf{NPGen}} \\
\rowcolor{tablegray}
& Novelty $\uparrow$ & Div. $\uparrow$ & Novelty $\uparrow$ & Div. $\uparrow$
& Novelty $\uparrow$ & Div. $\uparrow$ & Novelty $\uparrow$ & Div. $\uparrow$ \\
\midrule
STGG+ & 0.910 & 0.854 & 0.990 & 0.870 & 0.984 & 0.876 & 0.992 & 0.882 \\
FragFM+Guidance & 0.893 & 0.863 & 0.981 & 0.870 & 0.937 & 0.876 & 0.866 & 0.886 \\
STGG+ Count Mask & 0.910 & 0.854 & 1.000 & 0.872 & 0.984 & 0.877 & 0.992 & 0.883 \\
\rowcolor{cyan!15}
\textbf{TopoCompile (ours)} & \textbf{0.951} & \textbf{0.864} & \textbf{0.990} & \textbf{0.868} & \textbf{0.992} & \textbf{0.878} & \textbf{0.984} & \textbf{0.889} \\
\bottomrule
\end{tabularx}
\end{table}

\renewcommand{\thetable}{S2b}
\begin{table}[H]
\centering
\caption{Valid and E.V (Exact-valid) yields (\%) for cycle rank $\beta$.}
\label{tab:exact-control-beta}
\scriptsize
\setlength{\tabcolsep}{0.7pt}
\renewcommand{\arraystretch}{0.94}
\begin{tabularx}{\textwidth}{@{}l!{\vrule width 0.6pt}YYYY!{\vrule width 0.6pt}YYYY!{\vrule width 0.6pt}YYYY!{\vrule width 0.6pt}YYYY@{}}
\toprule
\textbf{Method}
& \multicolumn{4}{c!{\vrule width 0.6pt}}{\textbf{MOSES}}
& \multicolumn{4}{c!{\vrule width 0.6pt}}{\textbf{ZINC250K}}
& \multicolumn{4}{c!{\vrule width 0.6pt}}{\textbf{GuacaMol}}
& \multicolumn{4}{c}{\textbf{NPGen}} \\
\rowcolor{tablegray}
& \multicolumn{2}{c}{\textit{Common}} & \multicolumn{2}{c!{\vrule width 0.6pt}}{\textit{Rare}}
& \multicolumn{2}{c}{\textit{Common}} & \multicolumn{2}{c!{\vrule width 0.6pt}}{\textit{Rare}}
& \multicolumn{2}{c}{\textit{Common}} & \multicolumn{2}{c!{\vrule width 0.6pt}}{\textit{Rare}}
& \multicolumn{2}{c}{\textit{Common}} & \multicolumn{2}{c}{\textit{Rare}} \\
\rowcolor{tablegray}
& Valid & E.V & Valid & E.V & Valid & E.V & Valid & E.V
& Valid & E.V & Valid & E.V & Valid & E.V & Valid & E.V \\
\midrule
Graph DiT & 54.17 & 41.25 & 40.62 & 5.21 & 45.00 & 29.58 & 35.42 & 10.68 & 23.12 & 5.63 & 69.38 & 1.25 & 41.52 & 9.38 & 15.62 & 0.00 \\
STGG+ & 100.00 & 94.38 & 100.00 & 32.55 & 100.00 & 95.00 & 100.00 & 65.63 & 100.00 & 98.75 & 100.00 & 4.38 & 100.00 & 95.54 & -- & -- \\
FragFM+Guidance & 98.75 & 88.44 & 65.23 & 15.23 & 100.00 & 86.25 & 94.53 & 26.56 & 90.00 & 49.38 & 71.25 & 2.50 & 96.88 & 53.57 & 41.07 & 0.45 \\
GenMol+Filter & 100.00 & 18.75 & 100.00 & 0.00 & 100.00 & 16.88 & 100.00 & 0.00 & 100.00 & 20.00 & 100.00 & 0.00 & 100.00 & 14.29 & 100.00 & 0.00 \\
STGG+ Count Mask & 96.25 & 95.63 & 33.59 & 32.81 & 95.00 & 95.00 & 71.88 & 67.97 & 98.75 & 98.75 & 23.75 & 16.25 & 95.54 & 95.54 & 41.07 & 27.23 \\
\rowcolor{cyan!15}
\textbf{TopoCompile (ours)} & \textbf{100.00} & \textbf{100.00} & \textbf{100.00} & \textbf{100.00} & \textbf{100.00} & \textbf{100.00} & \textbf{100.00} & \textbf{100.00} & \textbf{100.00} & \textbf{100.00} & \textbf{100.00} & \textbf{100.00} & \textbf{100.00} & \textbf{100.00} & \textbf{100.00} & \textbf{100.00} \\
\bottomrule
\end{tabularx}

\vspace{1.5pt}
\begin{tabularx}{\textwidth}{@{}l!{\vrule width 0.6pt}YY!{\vrule width 0.6pt}YY!{\vrule width 0.6pt}YY!{\vrule width 0.6pt}YY@{}}
\toprule
\multicolumn{9}{@{}l}{\textit{B. Matched exact-output novelty and diversity}} \\
\rowcolor{tablegray}
\textbf{Method}
& \multicolumn{2}{c!{\vrule width 0.6pt}}{\textbf{MOSES}}
& \multicolumn{2}{c!{\vrule width 0.6pt}}{\textbf{ZINC250K}}
& \multicolumn{2}{c!{\vrule width 0.6pt}}{\textbf{GuacaMol}}
& \multicolumn{2}{c}{\textbf{NPGen}} \\
\rowcolor{tablegray}
& Novelty $\uparrow$ & Div. $\uparrow$ & Novelty $\uparrow$ & Div. $\uparrow$
& Novelty $\uparrow$ & Div. $\uparrow$ & Novelty $\uparrow$ & Div. $\uparrow$ \\
\midrule
STGG+ & 0.938 & 0.871 & 1.000 & 0.879 & 0.996 & 0.878 & 0.855 & 0.882 \\
FragFM+Guidance & 0.896 & 0.864 & 0.995 & 0.872 & 0.957 & 0.881 & 0.913 & 0.884 \\
STGG+ Count Mask & 0.931 & 0.871 & 0.997 & 0.880 & 0.996 & 0.878 & 0.855 & 0.882 \\
\rowcolor{cyan!15}
\textbf{TopoCompile (ours)} & \textbf{0.975} & \textbf{0.868} & \textbf{0.995} & \textbf{0.878} & \textbf{0.996} & \textbf{0.884} & \textbf{0.977} & \textbf{0.896} \\
\bottomrule
\end{tabularx}
\end{table}

\renewcommand{\thetable}{S2c}
\begin{table}[H]
\centering
\caption{Valid and E.V (Exact-valid) yields (\%) for cycle rank and oxygen count $(\beta,N_{\rm O})$.}
\label{tab:exact-control-beta-oxygen}
\scriptsize
\setlength{\tabcolsep}{0.7pt}
\renewcommand{\arraystretch}{0.94}
\begin{tabularx}{\textwidth}{@{}l!{\vrule width 0.6pt}YYYY!{\vrule width 0.6pt}YYYY!{\vrule width 0.6pt}YYYY!{\vrule width 0.6pt}YYYY@{}}
\toprule
\textbf{Method}
& \multicolumn{4}{c!{\vrule width 0.6pt}}{\textbf{MOSES}}
& \multicolumn{4}{c!{\vrule width 0.6pt}}{\textbf{ZINC250K}}
& \multicolumn{4}{c!{\vrule width 0.6pt}}{\textbf{GuacaMol}}
& \multicolumn{4}{c}{\textbf{NPGen}} \\
\rowcolor{tablegray}
& \multicolumn{2}{c}{\textit{Common}} & \multicolumn{2}{c!{\vrule width 0.6pt}}{\textit{Rare}}
& \multicolumn{2}{c}{\textit{Common}} & \multicolumn{2}{c!{\vrule width 0.6pt}}{\textit{Rare}}
& \multicolumn{2}{c}{\textit{Common}} & \multicolumn{2}{c!{\vrule width 0.6pt}}{\textit{Rare}}
& \multicolumn{2}{c}{\textit{Common}} & \multicolumn{2}{c}{\textit{Rare}} \\
\rowcolor{tablegray}
& Valid & E.V & Valid & E.V & Valid & E.V & Valid & E.V
& Valid & E.V & Valid & E.V & Valid & E.V & Valid & E.V \\
\midrule
Graph DiT & 57.52 & 36.69 & 53.24 & 1.27 & 53.32 & 24.31 & 61.93 & 3.22 & 21.00 & 1.37 & 35.35 & 0.00 & 35.16 & 2.25 & 35.84 & 0.00 \\
STGG+ & 100.00 & 97.51 & 100.00 & 9.14 & 100.00 & 96.43 & 100.00 & 14.77 & 100.00 & 96.09 & 100.00 & 0.68 & 100.00 & 89.55 & 92.09 & 1.17 \\
FragFM+Guidance & 99.74 & 66.15 & 75.78 & 5.90 & 100.00 & 55.80 & 91.48 & 5.68 & 91.11 & 26.27 & 70.51 & 0.10 & 98.14 & 14.45 & 66.11 & 0.00 \\
GenMol+Filter & 100.00 & 5.03 & 100.00 & 0.00 & 100.00 & 5.36 & 100.00 & 0.00 & 100.00 & 3.13 & 100.00 & 0.00 & 100.00 & 1.37 & 100.00 & 0.00 \\
STGG+ Count Mask & 97.40 & 97.22 & 44.79 & 39.06 & 97.47 & 97.32 & 49.57 & 47.16 & 97.07 & 96.88 & 17.68 & 14.45 & 93.75 & 93.46 & 62.50 & 55.57 \\
\rowcolor{cyan!15}
\textbf{TopoCompile (ours)} & \textbf{100.00} & \textbf{100.00} & \textbf{100.00} & \textbf{100.00} & \textbf{100.00} & \textbf{100.00} & \textbf{100.00} & \textbf{100.00} & \textbf{100.00} & \textbf{100.00} & \textbf{100.00} & \textbf{100.00} & \textbf{100.00} & \textbf{100.00} & -- & -- \\
\bottomrule
\end{tabularx}

\vspace{1.5pt}
\begin{tabularx}{\textwidth}{@{}l!{\vrule width 0.6pt}YY!{\vrule width 0.6pt}YY!{\vrule width 0.6pt}YY!{\vrule width 0.6pt}YY@{}}
\toprule
\multicolumn{9}{@{}l}{\textit{B. Matched exact-output novelty and diversity}} \\
\rowcolor{tablegray}
\textbf{Method}
& \multicolumn{2}{c!{\vrule width 0.6pt}}{\textbf{MOSES}}
& \multicolumn{2}{c!{\vrule width 0.6pt}}{\textbf{ZINC250K}}
& \multicolumn{2}{c!{\vrule width 0.6pt}}{\textbf{GuacaMol}}
& \multicolumn{2}{c}{\textbf{NPGen}} \\
\rowcolor{tablegray}
& Novelty $\uparrow$ & Div. $\uparrow$ & Novelty $\uparrow$ & Div. $\uparrow$
& Novelty $\uparrow$ & Div. $\uparrow$ & Novelty $\uparrow$ & Div. $\uparrow$ \\
\midrule
STGG+ & 0.933 & 0.865 & 0.997 & 0.878 & 0.992 & 0.882 & 0.944 & 0.887 \\
FragFM+Guidance & 0.922 & 0.860 & 0.996 & 0.873 & 0.989 & 0.878 & 0.911 & 0.880 \\
STGG+ Count Mask & 0.932 & 0.865 & 0.998 & 0.881 & 0.992 & 0.882 & 0.944 & 0.887 \\
\rowcolor{cyan!15}
\textbf{TopoCompile (ours)} & \textbf{0.990} & \textbf{0.871} & \textbf{1.000} & \textbf{0.883} & \textbf{0.998} & \textbf{0.883} & \textbf{0.995} & \textbf{0.894} \\
\bottomrule
\end{tabularx}
\end{table}

\renewcommand{\thetable}{S2d}
\begin{table}[H]
\centering
\caption{Valid and E.V (Exact-valid) yields (\%) for the four-count request
$(N,\beta,N_{\rm O},R_{\rm sys})$. Dashes in the STGG+ Count Mask row denote Valid
values not retained by the compact export; its exact-valid values are complete.}
\label{tab:four-count-extension}
\scriptsize
\setlength{\tabcolsep}{0.7pt}
\renewcommand{\arraystretch}{0.94}
\begin{tabularx}{\textwidth}{@{}l!{\vrule width 0.6pt}YYYY!{\vrule width 0.6pt}YYYY!{\vrule width 0.6pt}YYYY!{\vrule width 0.6pt}YYYY@{}}
\toprule
\textbf{Method}
& \multicolumn{4}{c!{\vrule width 0.6pt}}{\textbf{MOSES}}
& \multicolumn{4}{c!{\vrule width 0.6pt}}{\textbf{ZINC250K}}
& \multicolumn{4}{c!{\vrule width 0.6pt}}{\textbf{GuacaMol}}
& \multicolumn{4}{c}{\textbf{NPGen}} \\
\rowcolor{tablegray}
& \multicolumn{2}{c}{\textit{Common}} & \multicolumn{2}{c!{\vrule width 0.6pt}}{\textit{Rare}}
& \multicolumn{2}{c}{\textit{Common}} & \multicolumn{2}{c!{\vrule width 0.6pt}}{\textit{Rare}}
& \multicolumn{2}{c}{\textit{Common}} & \multicolumn{2}{c!{\vrule width 0.6pt}}{\textit{Rare}}
& \multicolumn{2}{c}{\textit{Common}} & \multicolumn{2}{c}{\textit{Rare}} \\
\rowcolor{tablegray}
& Valid & E.V & Valid & E.V & Valid & E.V & Valid & E.V
& Valid & E.V & Valid & E.V & Valid & E.V & Valid & E.V \\
\midrule
Graph DiT & 69.34 & 56.64 & 41.60 & 5.57 & 66.02 & 43.36 & 51.07 & 8.40 & 3.61 & 0.49 & 49.51 & 4.30 & \multicolumn{4}{c}{\textsc{oom}} \\
STGG+ & 100.00 & 80.86 & 100.00 & 29.00 & 100.00 & 79.59 & 100.00 & 37.50 & 100.00 & 34.86 & 100.00 & 13.87 & 100.00 & 16.89 & 100.00 & 3.71 \\
FragFM+Guidance & 97.85 & 22.07 & 69.34 & 0.20 & 99.12 & 11.04 & 97.46 & 0.00 & 93.16 & 4.98 & 90.53 & 0.00 & 98.34 & 1.56 & 97.46 & 0.00 \\
GenMol+Filter & 100.00 & 1.86 & 100.00 & 0.00 & 100.00 & 1.46 & 100.00 & 0.00 & 100.00 & 1.66 & 99.90 & 0.00 & 100.00 & 0.00 & 100.00 & 0.00 \\
STGG+ Count Mask & -- & 83.59 & -- & 35.55 & -- & 82.71 & -- & 44.92 & -- & 46.88 & -- & 15.53 & -- & 27.44 & -- & 14.06 \\
\rowcolor{cyan!15}
\textbf{TopoCompile (ours)} & \textbf{100.00} & \textbf{100.00} & \textbf{100.00} & \textbf{100.00} & \textbf{100.00} & \textbf{100.00} & \textbf{100.00} & \textbf{100.00} & \textbf{100.00} & \textbf{100.00} & \textbf{100.00} & \textbf{100.00} & \textbf{100.00} & \textbf{100.00} & \textbf{100.00} & \textbf{100.00} \\
\bottomrule
\end{tabularx}

\vspace{1.5pt}
\begin{tabularx}{\textwidth}{@{}l!{\vrule width 0.6pt}YY!{\vrule width 0.6pt}YY!{\vrule width 0.6pt}YY!{\vrule width 0.6pt}YY@{}}
\toprule
\multicolumn{9}{@{}l}{\textit{B. Matched exact-output novelty and diversity from the first 100 attempts}} \\
\rowcolor{tablegray}
\textbf{Method}
& \multicolumn{2}{c!{\vrule width 0.6pt}}{\textbf{MOSES}}
& \multicolumn{2}{c!{\vrule width 0.6pt}}{\textbf{ZINC250K}}
& \multicolumn{2}{c!{\vrule width 0.6pt}}{\textbf{GuacaMol}}
& \multicolumn{2}{c}{\textbf{NPGen}} \\
\rowcolor{tablegray}
& Novelty $\uparrow$ & Div. $\uparrow$ & Novelty $\uparrow$ & Div. $\uparrow$
& Novelty $\uparrow$ & Div. $\uparrow$ & Novelty $\uparrow$ & Div. $\uparrow$ \\
\midrule
STGG+ & 0.946 & 0.845 & \textbf{1.000} & 0.860 & 0.994 & 0.867 & \textbf{0.978} & 0.883 \\
FragFM+Guidance & 0.899 & 0.846 & 0.987 & 0.852 & 0.988 & 0.858 & 0.689 & 0.860 \\
STGG+ Count Mask & 0.952 & 0.847 & \textbf{1.000} & 0.861 & 0.994 & 0.869 & \textbf{0.978} & 0.883 \\
\rowcolor{cyan!15}
\textbf{TopoCompile (ours)} & \textbf{0.990} & \textbf{0.862} & \textbf{1.000} & \textbf{0.872} & \textbf{1.000} & \textbf{0.882} & 0.956 & \textbf{0.900} \\
\bottomrule
\end{tabularx}
\end{table}

\FloatBarrier
\begin{samepage}
\noindent
\begin{minipage}[t]{0.485\textwidth}
\centering
\includegraphics[width=\linewidth]{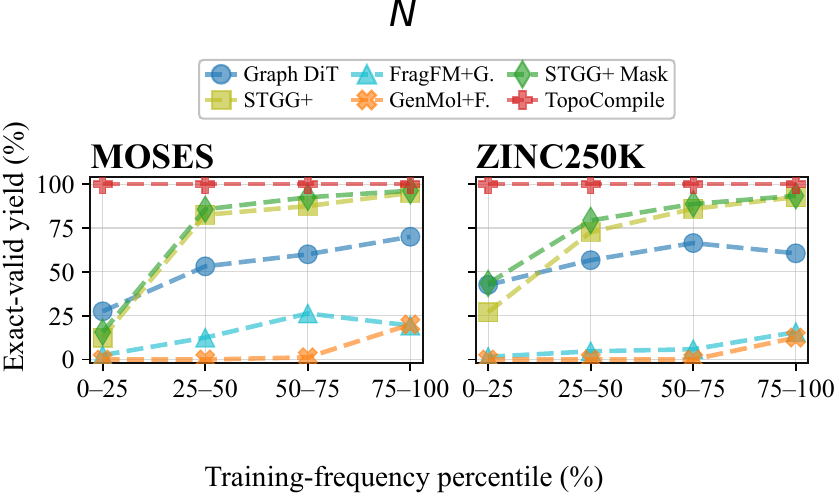}
\refstepcounter{figure}
\label{fig:full-frequency-response-n}
\par\small\raggedright
\textbf{Figure~\thefigure:} Exact-valid yield for the $N$ schema on MOSES and ZINC250K.
\end{minipage}\hfill
\begin{minipage}[t]{0.485\textwidth}
\centering
\includegraphics[width=\linewidth]{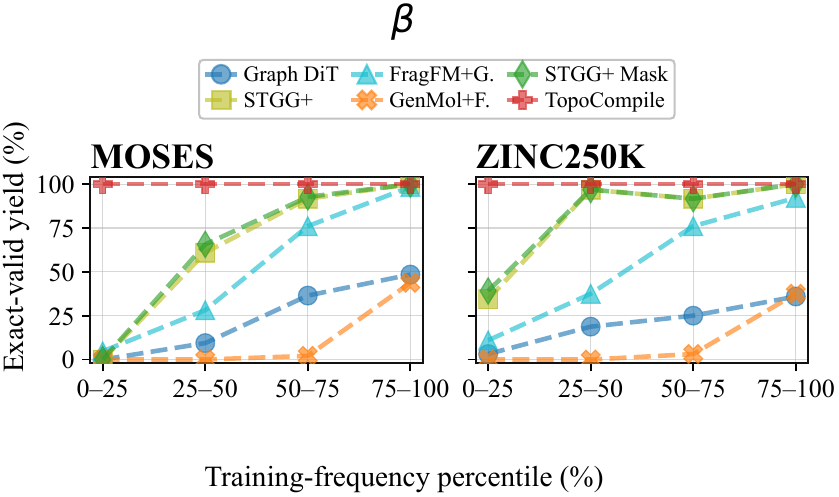}
\refstepcounter{figure}
\label{fig:full-frequency-response-beta}
\par\small\raggedright
\textbf{Figure~\thefigure:} Exact-valid yield for the $\beta$ schema on MOSES and ZINC250K.
\end{minipage}
\par\vspace{0.18em}

\noindent
\begin{minipage}[t]{0.485\textwidth}
\centering
\includegraphics[width=\linewidth]{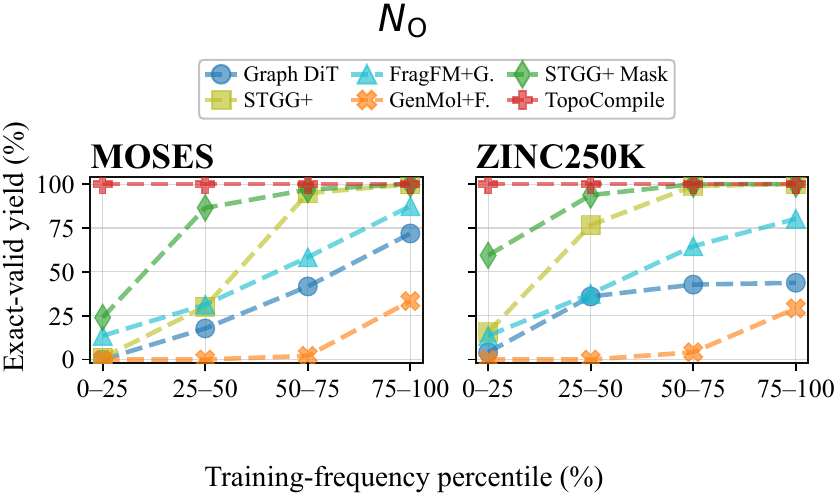}
\refstepcounter{figure}
\label{fig:full-frequency-response-oxygen}
\par\small\raggedright
\textbf{Figure~\thefigure:} Exact-valid yield for the $N_{\mathrm{O}}$ schema on MOSES and ZINC250K.
\end{minipage}\hfill
\begin{minipage}[t]{0.485\textwidth}
\centering
\includegraphics[width=\linewidth]{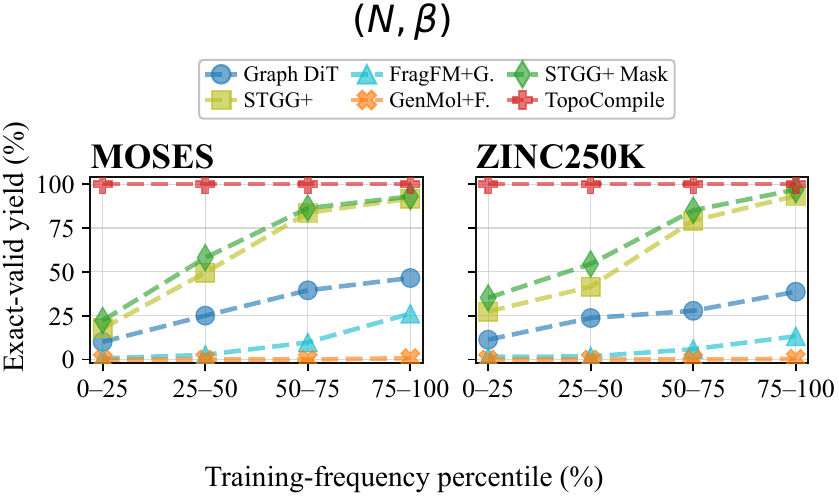}
\refstepcounter{figure}
\label{fig:full-frequency-response-n_beta}
\par\small\raggedright
\textbf{Figure~\thefigure:} Exact-valid yield for the $(N,\beta)$ schema on MOSES and ZINC250K.
\end{minipage}
\par\vspace{0.18em}

\noindent
\begin{minipage}[t]{0.485\textwidth}
\centering
\includegraphics[width=\linewidth]{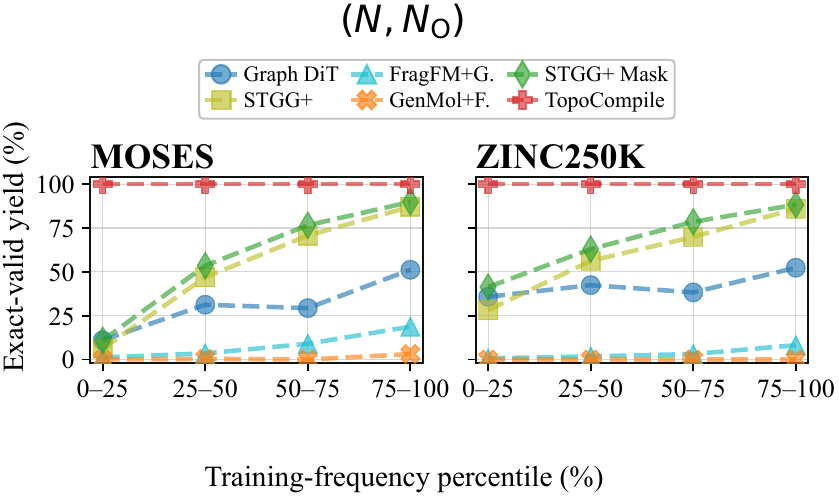}
\refstepcounter{figure}
\label{fig:full-frequency-response-n_oxygen}
\par\small\raggedright
\textbf{Figure~\thefigure:} Exact-valid yield for the $(N,N_{\mathrm{O}})$ schema on MOSES and ZINC250K.
\end{minipage}\hfill
\begin{minipage}[t]{0.485\textwidth}
\centering
\includegraphics[width=\linewidth]{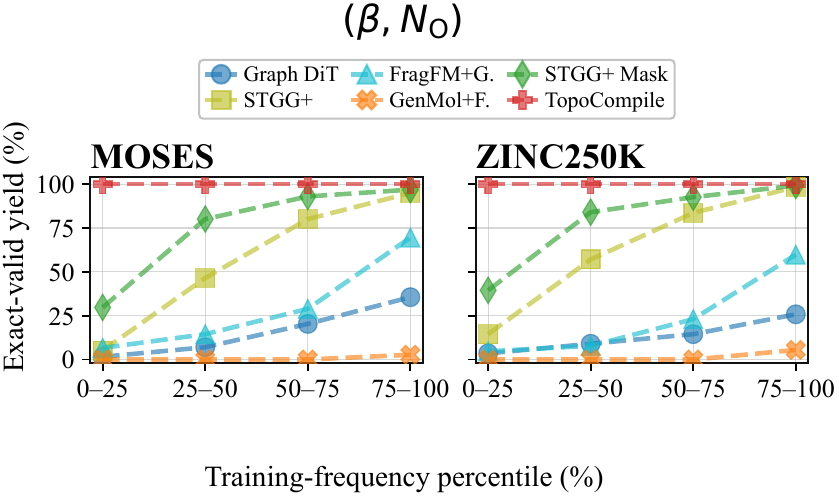}
\refstepcounter{figure}
\label{fig:full-frequency-response-beta_oxygen}
\par\small\raggedright
\textbf{Figure~\thefigure:} Exact-valid yield for the $(\beta,N_{\mathrm{O}})$ schema on MOSES and ZINC250K.
\end{minipage}
\par\vspace{0.18em}

\noindent\hfill
\begin{minipage}[t]{0.485\textwidth}
\centering
\includegraphics[width=\linewidth]{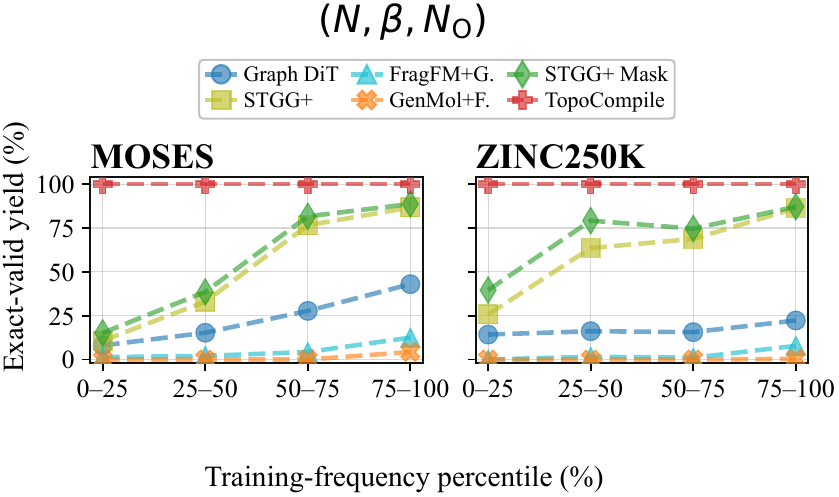}
\refstepcounter{figure}
\label{fig:full-frequency-response-n_beta_oxygen}
\par\small\raggedright
\textbf{Figure~\thefigure:} Exact-valid yield for the $(N,\beta,N_{\mathrm{O}})$ schema on MOSES and ZINC250K.
\end{minipage}
\hfill\null
\par\vspace{0.18em}
\end{samepage}

\paragraph{Results analysis.}
Tables~\ref{tab:exact-control-n}--\ref{tab:four-count-extension} show a
consistent separation between molecular validity and exact delivery.
TopoCompile attains 100.00\% exact-valid yield in every reported
dataset--tier cell for $N$, $\beta$, $(\beta,N_{\rm O})$, and
$(N,\beta,N_{\rm O},R_{\rm sys})$, whereas baseline yields generally decrease
from Common to Rare targets. Across the four available matched-quality
panels, TopoCompile retains novelty of 0.951--1.000 and internal diversity of
0.864--0.896. The completion check preserves exactness across count vectors
while the learned policy distinguishes feasible completions.

The schemas differ in how strongly they restrict baseline success and novel
output multiplicity. Size-only control remains broadly accessible to the
learned baselines, whereas Rare cycle-rank and joint cycle--oxygen requests
produce sharper yield losses, especially on GuacaMol and NPGen. For the
four-count Rare requests on NPGen, STGG+ remains 100.00\% Valid but reaches
3.71\% Exact-valid, while FragFM+Guidance remains 97.46\% Valid and reaches
0.00\% Exact-valid. This gap shows that validity alone does not establish
constraint satisfaction.
Figures~\ref{fig:full-frequency-response-n}--\ref{fig:full-frequency-response-n_beta_oxygen} provide a complementary frequency
breakdown: baseline yields often increase from the lowest to the highest
training-frequency ranges, while TopoCompile remains at 100\% across the
observed schema--frequency cells.
\FloatBarrier

\section{Application: High-Rank Molecule Generation}
\label{app:stress}
Here, high rank refers to a large molecular cycle rank $\beta$. Generating
molecules in this regime expands the tested topology beyond the frequencies
commonly observed during training and requires multiple ring-forming decisions
to satisfy one global structural request. TopoCompile supports this application
because cycle rank is additive over components and the joint DP
removes partial constructions that cannot complete to the requested $\beta$;
the user does not need to specify molecular size $N$ or component count $M$.

The experiments request only $\beta=4,\ldots,11$; no method receives $N$ or
$M$. They test whether each
method continues to satisfy larger cycle-rank requests and whether its valid
outputs become larger as $\beta$ increases. All requests completed with 1,000
attempts per method--rank cell.
\begin{figure}[H]
\centering
\includegraphics[width=\textwidth]{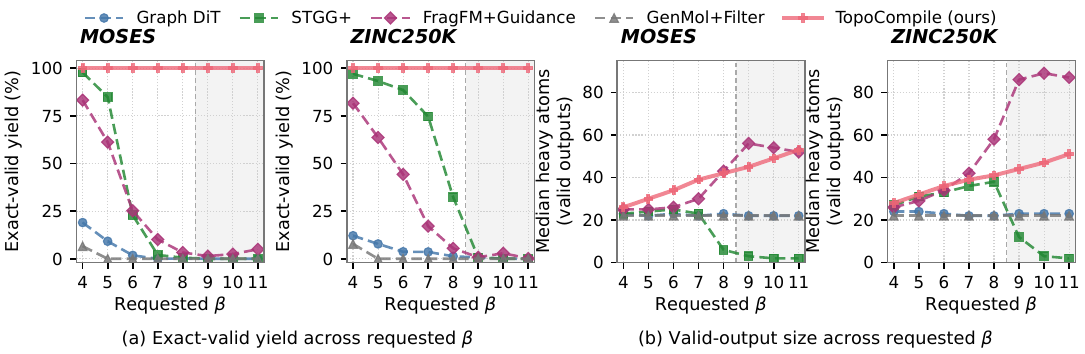}
\caption{Exact-valid yield (a) and valid-output size (b) by cycle rank for
MOSES and ZINC250K.}
\label{fig:high-cycle-robustness}
\end{figure}

TopoCompile maintains 100.0\% exact-valid yield across all eight requested
cycle ranks in both datasets. The baseline yields fall sharply
over the same range, with small nonmonotone fluctuations once they approach
zero. Within the unobserved range $\beta=9,10,11$, the highest baseline yield
is 5.0\% on MOSES and 2.9\% on ZINC250K, both attained by FragFM+Guidance.

TopoCompile's median heavy-atom count increases from 26 to 53 on MOSES and
from 28 to 51 on ZINC250K, while its median component count remains 10 and 11,
respectively. Across these tested requests, higher cycle rank is therefore
associated with larger TopoCompile outputs without an increase in their median
component count. Smaller baseline medians at high $\beta$ generally coincide
with zero or near-zero exact-valid yield; because the size curves include all
valid outputs, they do not indicate more compact exact solutions.

\begin{figure}[H]
\centering
\includegraphics[width=0.82\textwidth]{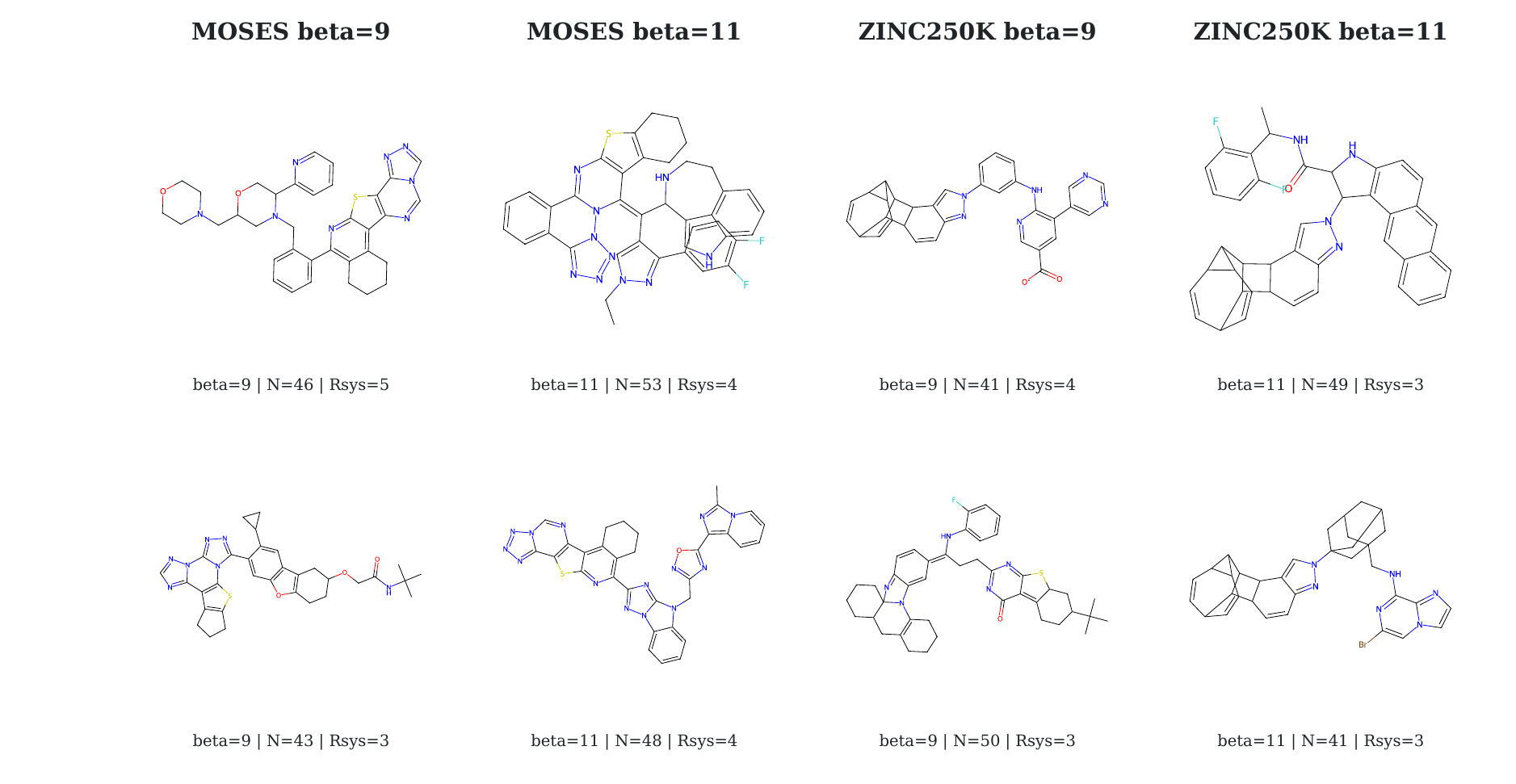}
\caption{TopoCompile examples for unobserved high-cycle requests on MOSES and
ZINC250K, with recomputed structural counts.}
\label{fig:high-rank-gallery}
\end{figure}

Figure~\ref{fig:high-rank-gallery} shows that the same high cycle-rank request
admits visibly different component arrangements. Across the eight fixed
examples, heavy-atom count ranges from 41 to 53 and ring-system count from 3 to
5 while every molecule realizes the requested $\beta\in\{9,11\}$. These
examples illustrate structural variation among exact high-rank outputs;
they do not estimate the output distribution.
\FloatBarrier

\section{Additional Results for Ablation Studies}
\label{app:resource-experiments}

\subsection{Experimental Settings}
We ablate three decisions: the completion check, the sampling policy over
feasible actions, and the compiler's computational reductions. The completion-
checking and policy studies use MOSES and ZINC250K with five schemas:
$(N,\beta)$, $(N,N_{\rm O})$, $(\beta,N_{\rm O})$,
$(N,\beta,N_{\rm O})$, and $(N,\beta,N_{\rm O},R_{\rm sys})$. 

For completion checking, we fix the checkpoint, vocabulary, requests, component
count $M$, target identities, attempt order, Learned policy, and 32 attempts per
target. Local enforces immediate structural and count consistency; Factorized
adds exact count reachability independently of structural reachability; Joint
requires one continuation to satisfy both. Exact-valid yield and novel unique
exact yield (NUE) retain all attempts in their denominators, while NUE removes training
identities and deduplicates within each target.

For the policy study, we fix Joint checking and compare Uniform, Frequency,
and Learned over the first 100 attempts per target. Uniform assigns equal
probability to legal values of each predicted action field, Frequency uses their
training frequencies, and Learned uses the Transformer probabilities. FCD,
internal diversity, scaffold diversity, Filters, and NSPDK use exact-output
pools matched across policies by target and attempt order. The compiler study
compares full compilation with grouping, optional-bound, count-sum, and
structural-pruning ablations on fixed-$M$ cases.

Each compiler variant has three fresh-process repeats; the table reports median
time, states, and transitions, together with peak process memory and completed
compilations.

\subsection{Results}
\paragraph{Completion checking.}
Table~\ref{tab:rq2-mask-schema} reports the completion-checking results.

\renewcommand{\thetable}{S4a}
\begin{table}[t]
\centering
\caption{Completion-checking ablation results for exact-valid yield and novel
unique exact yield on MOSES and ZINC250K.}
\label{tab:rq2-mask-schema}
\scriptsize
\setlength{\tabcolsep}{3.0pt}
\begin{tabular}{@{}llrrr@{}}
\toprule
Dataset & Schema & Local & Factorized & Joint \\
 & & \multicolumn{3}{c}{Exact / NUE (\%)} \\
\midrule
MOSES & $(N,\beta)$ & 41.35 / 39.90 & 72.66 / 70.43 & \textbf{100.00} / \textbf{95.79} \\
 & $(N,N_{\rm O})$ & 30.42 / 29.93 & 83.84 / 83.01 & \textbf{100.00} / \textbf{99.12} \\
 & $(\beta,N_{\rm O})$ & 56.16 / 55.56 & 83.25 / 82.20 & \textbf{100.00} / \textbf{98.78} \\
 & $(N,\beta,N_{\rm O})$ & 34.08 / 33.64 & 83.06 / 82.18 & \textbf{100.00} / \textbf{98.83} \\
 & $(N,\beta,N_{\rm O},R_{\rm sys})$ & 15.04 / 14.84 & 71.68 / 69.82 & \textbf{100.00} / \textbf{97.66} \\
\midrule
ZINC250K & $(N,\beta)$ & 28.03 / 27.73 & 67.14 / 66.60 & \textbf{100.00} / \textbf{98.78} \\
 & $(N,N_{\rm O})$ & 30.27 / 30.13 & 82.67 / 82.18 & \textbf{100.00} / \textbf{99.51} \\
 & $(\beta,N_{\rm O})$ & 50.00 / 50.00 & 74.27 / 74.20 & \textbf{100.00} / \textbf{99.93} \\
 & $(N,\beta,N_{\rm O})$ & 23.68 / 23.29 & 86.23 / 84.62 & \textbf{100.00} / \textbf{97.17} \\
 & $(N,\beta,N_{\rm O},R_{\rm sys})$ & 12.40 / 11.96 & 80.27 / 77.59 & \textbf{100.00} / \textbf{93.51} \\
\bottomrule
\end{tabular}
\end{table}

The table also reports NUE. Its close agreement with Exact-valid yield for
Local and Factorized shows that most exact outputs are absent from training and
unique within their targets.

\paragraph{Sampling policy.}
Figure~\ref{fig:rq2-policy-tradeoff} and
Table~\ref{tab:rq2-policy-dataset-quality}
compare sampling policies under Joint checking.
\begin{figure}[t]
\centering
\includegraphics[width=0.98\textwidth]{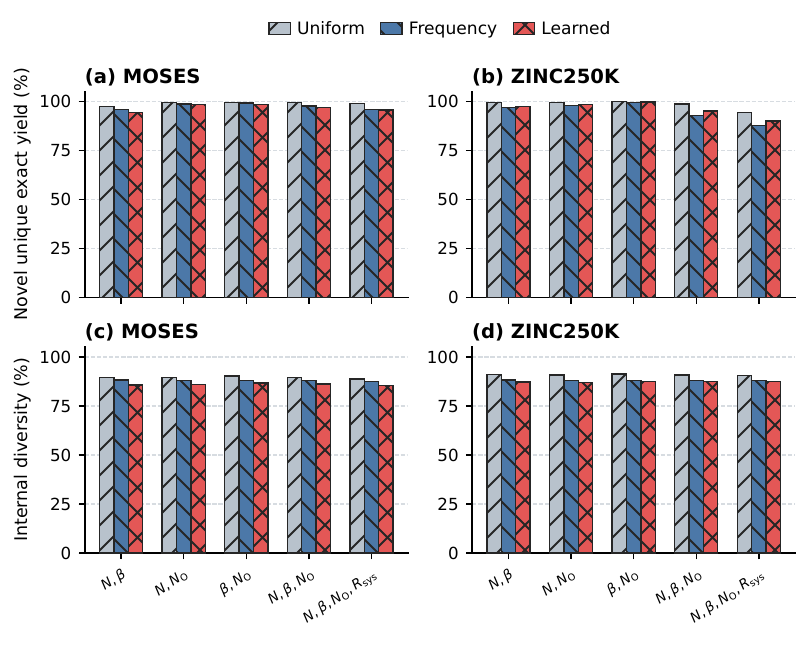}
\caption{Joint ablations: NUE (a,b) and matched-output internal diversity
(c,d).}
\label{fig:rq2-policy-tradeoff}
\end{figure}

\renewcommand{\thetable}{S4c}
\begin{table}[t]
\centering
\caption{Dataset-specific Filters and NSPDK results under different sampling
policies.}
\label{tab:rq2-policy-dataset-quality}
\footnotesize
\setlength{\tabcolsep}{4pt}
\begin{tabular}{@{}lllrrr@{}}
\toprule
Dataset & Schema & Metric & Uniform & Frequency & Learned \\
\midrule
MOSES & $(N,\beta)$ & Filters (\%) $\uparrow$ & 66.39 & 73.97 & \textbf{97.51} \\
 & $(N,N_{\rm O})$ & Filters (\%) $\uparrow$ & 66.91 & 76.27 & \textbf{95.66} \\
 & $(\beta,N_{\rm O})$ & Filters (\%) $\uparrow$ & 65.00 & 77.84 & \textbf{94.74} \\
 & $(N,\beta,N_{\rm O})$ & Filters (\%) $\uparrow$ & 64.66 & 75.19 & \textbf{96.16} \\
 & $(N,\beta,N_{\rm O},R_{\rm sys})$ & Filters (\%) $\uparrow$ & 69.61 & 74.77 & \textbf{96.38} \\
\midrule
ZINC250K & $(N,\beta)$ & NSPDK $\downarrow$ & 0.0739 & 0.0373 & \textbf{0.0239} \\
 & $(N,N_{\rm O})$ & NSPDK $\downarrow$ & 0.0808 & 0.0207 & \textbf{0.0046} \\
 & $(\beta,N_{\rm O})$ & NSPDK $\downarrow$ & 0.0666 & 0.0181 & \textbf{0.0045} \\
 & $(N,\beta,N_{\rm O})$ & NSPDK $\downarrow$ & 0.0791 & 0.0412 & \textbf{0.0242} \\
\bottomrule
\end{tabular}
\end{table}

\paragraph{Compiler computation.}

\renewcommand{\thetable}{S6}
\begin{table}[H]
\centering
\caption{Compiler-computation ablation results on matched fixed-$M$
conditions.}
\label{tab:compiler-ablation}
\footnotesize
\setlength{\tabcolsep}{3pt}
\begin{tabular}{@{}lrrrrr@{}}
\toprule
Dataset / variant & Time (s) & States & Transitions & Peak (GiB) & Complete / total \\\midrule
moses/full & 0.044 & 1.35e+03 & 2.18e+04 & 4.04 & 4314/4314 \\
moses/without grouping & 0.611 & 1.35e+03 & 3.84e+05 & 4.04 & 4314/4314 \\
moses/without optional bounds & 0.0473 & 1.35e+03 & 2.18e+04 & 4.04 & 4314/4314 \\
moses/without count sum & 1.51 & 3.88e+03 & 1.06e+06 & 4.04 & 4314/4314 \\
moses/without structural pruning & 9.4 & 3.32e+04 & 5.56e+06 & 4.04 & 3722/4314 \\
zinc250k/full & 0.0488 & 1.5e+03 & 2.22e+04 & 1.36 & 5448/5448 \\
zinc250k/without grouping & 0.596 & 1.5e+03 & 3.44e+05 & 1.36 & 5448/5448 \\
zinc250k/without optional bounds & 0.0522 & 1.5e+03 & 2.22e+04 & 1.36 & 5448/5448 \\
zinc250k/without count sum & 1.8 & 4.37e+03 & 1.26e+06 & 1.36 & 5442/5448 \\
zinc250k/without structural pruning & 0.495 & 5.64e+03 & 2.57e+05 & 1.36 & 5058/5448 \\

\bottomrule
\end{tabular}
\end{table}

The compiler ablation distinguishes reductions that change the reachable state
set from those that avoid repeated work. Without grouping, the state count does
not change, but transitions increase by 17.6$\times$ on MOSES and
15.5$\times$ on ZINC250K, explaining the corresponding 13.9$\times$ and
12.2$\times$ time increases. Removing count-sum or structural pruning also
expands the state and transition sets and can prevent completion within the
resource limits. Removing optional bounds leaves states, transitions, and
completion unchanged and increases median time by at most 7.5\%, so these
bounds have a limited effect under the tested conditions.
\FloatBarrier


\section{Vocabulary Support and Component Recombination}
\label{app:boundaries}
\renewcommand{\thetable}{S8a}
\begin{table}[H]
\centering
\caption{Component-vocabulary coverage results on the four test sets.}
\label{tab:vocabulary-support}
\footnotesize
\begin{tabular}{lrrrrrr}
\toprule
Dataset & \shortstack{Vocabulary\\size} & Mol. cov. & Component cov. & Whole & Mean $M/N$ & Core occ./unique \\
\midrule
MOSES & 14,798 & 99.29 & 99.93 & 0.0023 & 0.454 & 99.88/82.56 \\
ZINC250K & 10,651 & 97.76 & 99.79 & 0.0200 & 0.476 & 99.67/94.52 \\
GuacaMol & 57,662 & 97.33 & 99.79 & 0.0582 & 0.453 & 99.65/81.58 \\
NPGen & 70,591 & 93.25 & 99.64 & 0.1397 & 0.518 & 98.80/83.21 \\
\bottomrule
\end{tabular}
\end{table}
\renewcommand{\thetable}{S8b}
\begin{table}[H]
\centering
\caption{Ring-distribution results for TopoCompile on MOSES, ZINC250K, and
GuacaMol.}
\label{tab:ring-component-audit}
\footnotesize
\resizebox{\textwidth}{!}{%
\begin{tabular}{lrrrrrrrrr}
\toprule
Dataset & Ring systems & $\beta$ & Ring size & Topology & Core & Core pair & Top core (\%) & Distinct cov. (\%) & Valid/attempts \\
\midrule
MOSES & 0.9634 & 0.9354 & 0.9992 & 0.9976 & 0.9778 & 0.8912 & 33.85 & 19.77 & 25000/25000 \\
ZINC250K & 0.9945 & 0.9946 & 0.9998 & 0.9990 & 0.9689 & 0.8351 & 36.04 & 35.62 & 25000/25000 \\
GuacaMol & 0.9970 & 0.9973 & 0.9988 & 0.9973 & 0.9262 & 0.8157 & 43.67 & 8.29 & 10000/10000 \\
\bottomrule
\end{tabular}
}
\end{table}

Singleton component types account for 43.1\%, 43.8\%, 52.0\%, and 51.2\%
of the MOSES, ZINC250K, GuacaMol, and NPGen vocabularies, respectively.
The 1,000 most frequent types cover 98.9\%, 98.3\%, 97.2\%, and 97.4\%
of training component occurrences. Occurrence coverage and type-level
coverage therefore describe different aspects of the support boundary.
Component coverage defines the support of the representation. New molecules
can combine known components in previously unobserved multisets and bridge
attachments; this does not create a cyclic interior absent from the vocabulary.
For the MOSES sample, 23,146 of 24,978 distinct generated canonical
SMILES are absent from training. Among all 25,000 valid generated molecules,
76.61\% use a new component multiset and 97.96\% a new attachment-bag
signature. The latter records endpoint slots and bond types while ignoring
connectivity among multiple instances of the same component template.

\section{Supplementary Molecules}
\label{app:molecule-visualizations}
\begin{figure}[H]
\centering
\includegraphics[width=0.62\textwidth]{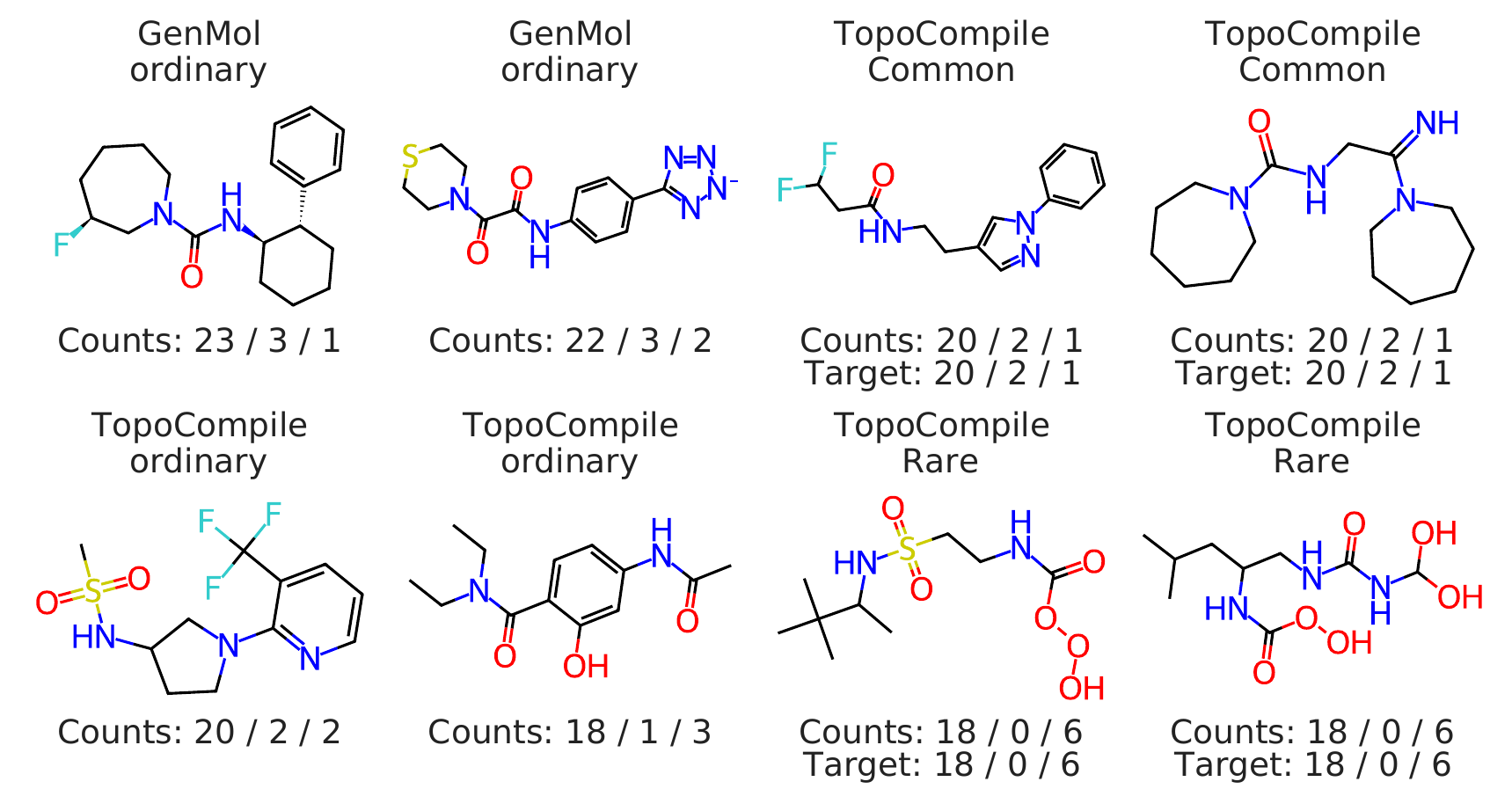}
\caption{MOSES examples from unconditional generation and shared
exact-control requests.}
\label{fig:app-standard-random-gallery}
\end{figure}
\FloatBarrier

\section{Relation to Hard-Constrained Molecular Generation}
\label{app:hard-constrained}

TopoCompile addresses a restricted but exact problem: equalities over
non-negative component-additive integer counts under a finite component
vocabulary and typed attachment grammar. Table~\ref{tab:hard-constrained-comparison}
separates this contribution from exact constrained generation in general.
Projection methods enforce supported graph properties during diffusion; CP-based
methods solve or propagate symbolic sequence models; and SMT-based methods
assemble neural proposals under logical constraints. TopoCompile instead
specializes completion reasoning to the aggregate state
$(k,\mathbf{f},\mathbf{b})$ and queries exact suffix existence before every
concrete component-and-attachment action. Its claim is therefore not the first
use of hard constraints or neuro-symbolic generation, but an exact and reusable
prefix-completion test induced by the component-tree representation.

Classical constraint programming constructs molecular graphs with required
and forbidden fragments \citep{omrani2020constraints}. Beyond molecules,
Ctrl-G combines a distilled hidden Markov model with deterministic finite
automata to guarantee logical constraints during text generation
\citep{zhang2024adaptable}. TopoCompile instead derives a Boolean completion
test directly from molecular count and attachment requirements, without a
distilled sequence model. MacroGuide uses persistent-homology guidance to
promote macrocycle generation, addressing a different topological objective
from exact cycle-rank control \citep{maksymiuk2026macroguide}.

\begingroup
\renewcommand{\thetable}{S10}
\begin{table}[H]
\centering
\caption{Mechanistic comparison of hard-constrained generators.}
\label{tab:hard-constrained-comparison}
\tiny
\setlength{\tabcolsep}{1.5pt}
\renewcommand{\arraystretch}{0.84}
\begin{tabularx}{\textwidth}{@{}>{\raggedright\arraybackslash}p{0.18\textwidth}>{\raggedright\arraybackslash}X>{\raggedright\arraybackslash}X>{\raggedright\arraybackslash}X@{}}
\toprule
Method & Specification & Enforcement point & Relation to \textsc{TopoCompile} \\
\midrule
PRODIGY / ConStruct\newline
\citep{sharma2024prodigy,madeira2024construct}
& Supported hard graph properties
& Projection during diffusion
& Exactness depends on a suitable property-specific projector or closure
  condition, rather than suffix feasibility for an additive count vector \\
\addlinespace
Saikali \& Pesant\newline
\citep{saikali2026constrained}
& SMILES grammar with physicochemical and structural constraints
& CP construction and search, guided by belief propagation
& General symbolic search rather than a compact component-tree completion
  state \\
\addlinespace
GeAI-BLAnC\newline
\citep{manibod2025constrained}
& Fixed-length sequence constraints, including SMILES grammar and
  molecular-weight ranges
& CP/BP modifies the next-token distribution
& Non-backtracking generation may fail to complete; full completion can use
  backtracking \\
\addlinespace
Geng et al.\newline
\citep{geng2026neural}
& Chemical, structural, and user logical constraints
& SMT assembly from neural proposals
& The solver guarantees assembled outputs rather than every neural
  construction prefix \\
\addlinespace
\textsc{TopoCompile}
& Multiple component-additive count equalities
& Exact DP on $(k,\mathbf{f},\mathbf{b})$ before every action
& Every retained prefix is completable relative to the vocabulary and grammar,
  without trajectory backtracking or completed-sample rejection/repair \\
\bottomrule
\end{tabularx}
\end{table}
\endgroup

\FloatBarrier

The published evaluations of these methods are not numerically interchangeable
with Table~\ref{tab:shared-exact-control}. GeAI-BLAnC studies fixed-length SMILES
under a molecular-weight interval, projection methods use their own constraint
families and diffusion checkpoints, and Geng et al.~\citep{geng2026neural}
evaluate SMT-backed logical constraints. We therefore compare these systems
mechanistically and reserve Table~\ref{tab:shared-exact-control} for methods
evaluated under the shared four-count requests and attempt budgets.
\FloatBarrier

\end{document}